\newtheorem{theorem}{Theorem}
\newtheorem*{theorem*}{Theorem}
\newtheorem{problem}{Problem}
\newtheorem{proposition}{Proposition}
\DeclareMathOperator{\tr}{tr}
\def\real{\mbox{\rm I\kern-0.18em R}}
\def\Tiny{\fontsize{6.8pt}{6.8pt}\selectfont}
\title{\LARGE \bf Synergy--Based Hand Pose Sensing:\\ Optimal Glove Design\thanks{This work is supported by the European Commission under CP
grant no. 248587, THE Hand Embodied, within the FP7-ICT-2009-4-2-1
program Cognitive Systems and Robotics.}}
\author{Matteo Bianchi\thanks{The Interdept. Research Center ``En\-ri\-co Pi\-ag\-gio'', U\-ni\-ver\-si\-ty of Pisa, via Diotisalvi 2, 56100 Pisa, Italy. {\tt\footnotesize	 m.bianchi,p.salaris,bicchi@centropiaggio.unipi.it}}, \and Paolo Salaris\footnotemark[2], \and Antonio Bicchi\footnotemark[2] \thanks{Department of Advanced Robotics, Istituto Italiano di Tecnologia, via Morego, 30, 16163 Genova, Italy} }
\date{ }
\begin{document}

\maketitle
\thispagestyle{empty}
\pagestyle{empty}

\begin{abstract}
In this paper we study the problem of improving human hand pose sensing device performance by exploiting the knowledge on how humans most frequently use their hands in grasping tasks. In a companion paper we studied the problem of maximizing the reconstruction accuracy of the hand pose from partial and noisy data provided by any given pose sensing device (a sensorized ``glove'') taking into account statistical {\em a priori} information. In this paper we consider the dual problem of how to design pose sensing devices, i.e.~how and where to place sensors on a glove, to get maximum information about the actual hand posture. We study the continuous case, whereas individual sensing elements in the glove measure a linear combination of joint angles, the discrete case, whereas each measure corresponds to a single joint angle, and the most general hybrid case, whereas both continuous and discrete sensing elements are available. The objective is to provide, for given {\em a priori} information and fixed number of measurements, the optimal design minimizing in average the reconstruction error. Solutions relying on the geometrical synergy definition as well as gradient flow-based techniques are provided. Simulations of reconstruction performance show the effectiveness of the proposed optimal design.
\end{abstract}

\section{Introduction}
\label{Int}

This paper investigates the problem of estimating the
posture of human hands using sensing devices, and how to improve their
performance based on the knowledge on how humans most frequently use
their hands. Similarly to the companion paper \cite{Bianchi_etalI},
this work is motivated by studies on the human hand in grasping tasks
\cite{Santelloart} suggesting hand posture representations of
increasing complexity (``synergies''), which allow to reduce the
number of Degrees of Freedom (DoFs) to be used according to the
desired level of approximation. In \cite{Bianchi_etalI}, we analyzed
the role of the {\em a priori} information for pose hand
reconstructions by using given sensing devices, and showed that
acceptable reconstruction results can be obtained even in presence of
insufficient and inaccurate sensing data.

In this work, we extend the analysis to consider the optimal design of
sensing ``gloves'', i.e.~devices for hand pose reconstruction based on
measurements of few geometric features of the hand. The problem we
consider is to find the distribution of a number of sensing elements
of limited accuracy so as to provide, together with the {\em a priori}
information, the optimal design which minimizes in probability the
reconstruction error. The problem becomes particularly relevant when
limits on the production costs of sensing gloves introduce constraints
limiting both the number and the quality of sensors. In these cases, a
careful design of sensor distribution is instrumental to obtain good
performance.
\begin{figure}[!t]
\centering \subfigure{\label{fig:ContinuousSensing}\includegraphics[width=0.405\columnwidth]{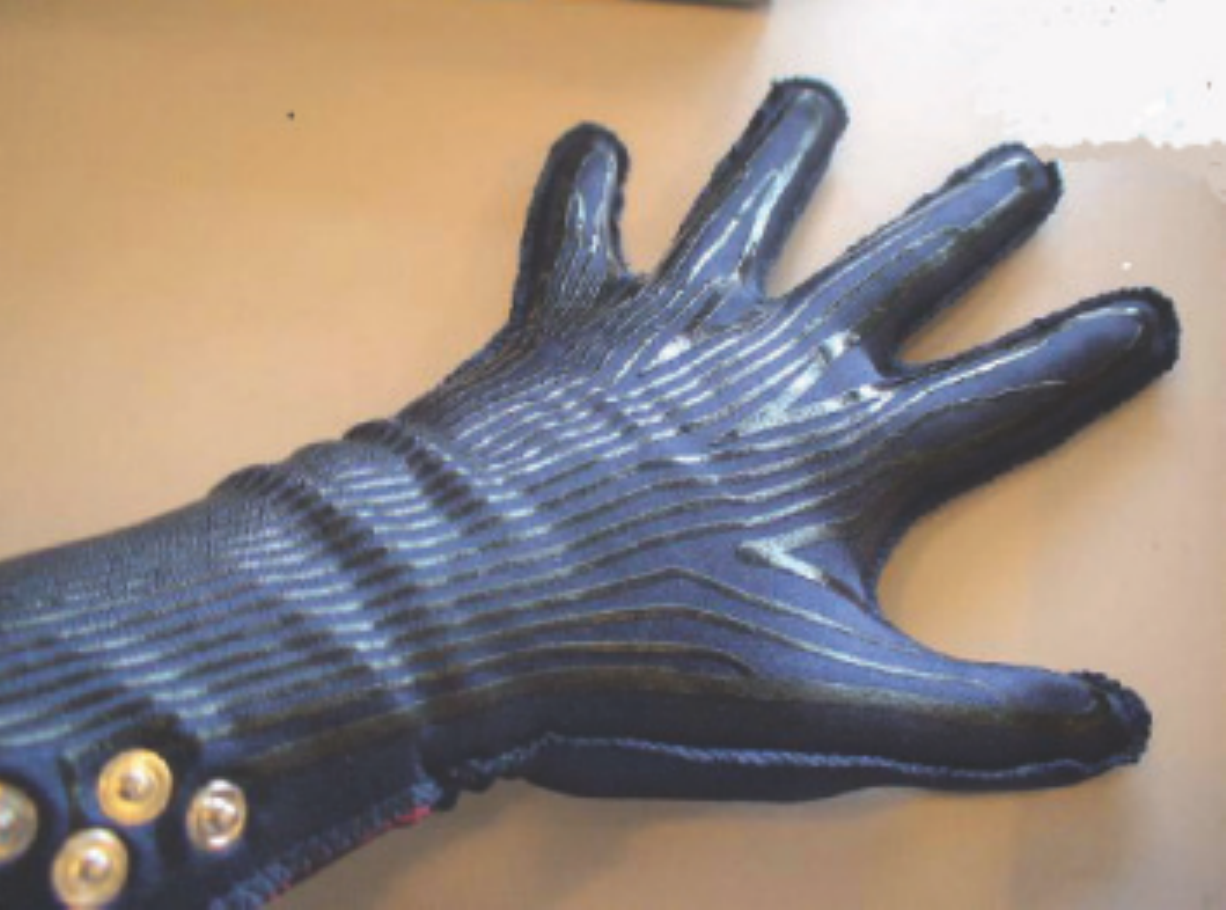}}
  \qquad \subfigure{\label{fig:DiscreteSensing}\includegraphics[width=0.4\columnwidth]{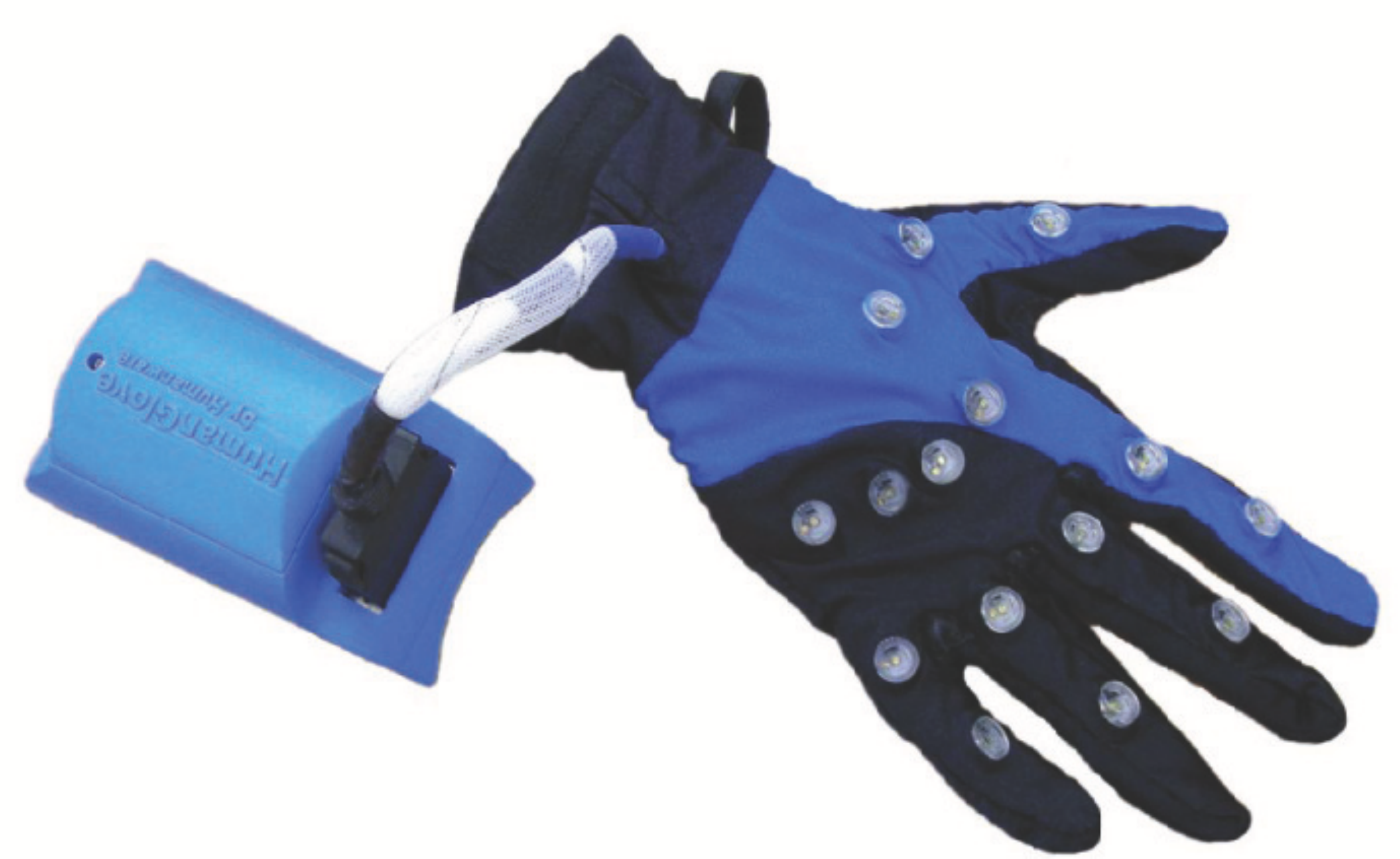}}
  \caption{Examples of continuous and discrete sensorized gloves.  On
      the left, a sensing glove based on conductive elastomer sensor
      strips printed on fabric, each measuring a linear combination of
      joint angles \cite{Tognetti}. On the right, the Humanglove (image
      courtesy by Humanware s.r.l.~(www.hmw.it/)), using individual joint
      angle sensors.}
\label{fig:guanti}
\end{figure}

Optimal experimental design represents a challenging, widely discussed
topic in literature \cite{BookOptimalDesign}. Among all optimal design
criteria, Bayesian methods are ideally suited to contribute to
experimental design and error statistics minimization, when some
information is available prior to experimentation (see
e.g.~\cite{Bayesian1,Bayesian2,Bicchican} for a review).  On the
contrary, non Bayesian criteria are adopted when a linear Gaussian
hypothesis is not fulfilled and/or when the designer's primary concern
is to minimize worst-case sensing errors rather than error
statistics. Criteria on explicit worst-case/deterministic bounds on
the errors and tools from the theory of optimal
worst-case/deterministic estimation and/or identification are
discussed e.g.~in~\cite{Helmicki,Tempo,Bicchican,BicchiOpt}.

However, most of these approaches refer to cases with a number of basic sensors which is redundant or at least equal to the number of variables to be estimated. Moreover, no previous example of application to the peculiar problem of exploiting the a priori psychophysical information on the structure of human hand embodiment
for under--sensorized gloves has ever been reported. In~\cite{Sturman93} an investigation of ``whole-hand'' interfaces for the control of complex tasks is presented, along with the description, design, and evaluation of whole-hand inputs, based on empirical data from users. In~\cite{Edmin} authors discussed the properties, advantages, and design aspects associated with piezoelectric materials
for sensing glove design, in an application where the device is used
as a keyboard. Finally,~\cite{Chang} authors explored how to methodically select a minimal set of hand pose features from optical marker data for grasp recognition. The objective is to determine marker locations on the hand surface that is appropriate for grasp classification of hand poses. All the aforementioned approaches rely on experimental or qualitative observations: from actual sensor data, locations that provide the largest and most useful information on the system are chosen.

In this paper, we investigate in depth the problem of obtaining the
optimal distribution of sensors minimizing in probability the
reconstruction error of hand poses. We adopt a classical Bayesian
approach to minimize the {\em a posteriori} covariance matrix norm and
hence, to maximize the information on the real hand posture available by
the glove measurement.

The {\em a posteriori} covariance matrix,
$P_p=P_o-P_oH^T(HP_oH^T$$+R)^{-1}HP_o$, which directly depends on the
sensor design through the measurement matrix $H$, its noise covariance
$R$, and on the {\em a priori} information $P_o$, represents a measure
of the amount of information that the observable variables carry about
the unknown pose parameters. Here we explore the role of the
measurement matrix $H$ on the estimation procedure, providing the
optimal design of a sensing device able to get the maximum amount of
the information on the actual hand posture.

We first consider the {\em continuous} sensing case, where individual
sensing elements in the glove can be designed so as to measure a
linear combination of joint angles. An example of this type is the
sensorized glove developed in~\cite{Tognetti}
(cf.~figure~\ref{fig:ContinuousSensing}), or the 5DT Data Glove (5DT
Inc., Irvine, CA - USA). Other devices, such as e.g.  the Cyberglove (CyberGlove System LLC, San Jose, CA - USA), or the Humanglove (Humanware s.r.l., Pisa, Italy)
shown in figure~\ref{fig:DiscreteSensing}, provide instead {\em discrete} sensing, i.e.~each sensor provides a measure of a single joint angle.

Finally, for the sake of generality, we consider the optimal design of
hybrid sensing devices, which combine continuous and discrete
sensors. It is interesting to note that human hands represent, to some
extent, examples of such hybrid sensing: among the cutaneous
mechanoreceptors in the dorsal skin of the hand that were demonstrated
to be involved in the responses to finger movements, \cite{Edin}
includes both Fast Adapting (mainly FAI) afferents, with localized
response to movements about one or, at most, two nearby joints; and
Slow Adapting (SA) afferents, whose discharge rate is influenced by
several joints interactively. Note also that FA units are found
primarily close to joints, while SA units are more uniformly
distributed.

To validate our technique we consider hand posture reconstruction
using a limited number of measurements from a set of grasp postures
acquired with an optical tracking system, providing accurate reference
poses. Experiments and statistical analyses demonstrate the improvement of
the estimation techniques proposed in \cite{Bianchi_etalI} by using
the optimal design proposed in this paper.

\section{Problem Definition}
\label{hand}

For reader's convenience we summarize here the definitions and results of~\cite{Bianchi_etalI} used in the following. Let us assume a $n$ degrees of freedom kinematic hand model and let $y\in\real^m$ be the measures provided by a sensing glove. The relationship between joint variables $x\in\real^n$ and measurements $y$ is
\begin{equation}
	\label{eq:MeasuresGlove}
		y = Hx+\nu\,,
\end{equation}
where $H\in\real^{m\times n}$ ($m<n$) is a full row rank matrix, and $v \in \real^m$ is a vector of measurement noise. In~\cite{Bianchi_etalI}, the goal is to determine the hand posture, i.e.~the joint angles $x$, by using a set of measures $y$ whose number is lower than the number of DoFs describing the kinematic hand model in use. To improve the hand pose reconstruction, we used postural synergy information embedded in the \emph{a priori} grasp set, which is obtained by collecting a large number $N$ of grasp postures $x_i$, consisting of $n$ DoFs, into a matrix $X\in\real^{n\times N}$. This information can be summarized in a covariance matrix $P_o\in\real^{n\times n}$, which is a symmetric matrix computed as $P_o=\frac{(X-\bar{x})(X-\bar{x})^{T}}{N-1}$, where $\bar{x}$ is a matrix $n\times N$ whose columns contain the mean values for each joint angle arranged in vector $\mu_o\in\real^{n}$. 

Based on the Minimum Variance Estimation (MVE) technique, in~\cite{Bianchi_etalI} we obtained the hand pose reconstruction as
\begin{equation}
\hat{x} = (P_o^{-1}+H^{T}R^{-1}H)^{-1}(H^{T}R^{-1}y + P_o^{-1}\mu_o)\,,
\label{eq:MAP}
\end{equation}
where matrix $P_p = (P_o^{-1}+H^{T}R^{-1}H)^{-1}$ is the \emph{a posteriori} covariance matrix. When $R$ tends to assume very small values, the solution described in~\eqref{eq:MAP} might encounter numerical problems. However, by using the Sherman-Morrison-Woodbury formulae, \eqref{eq:MAP} can be rewritten as
\begin{equation}
\label{eq:MVE2}
\hat{x} = \mu_o - P_oH^{T}(HP_oH^{T}+R)^{-1}(H\mu_o-y)\,,
\end{equation}
and the {\em a posteriori} covariance matrix becomes $P_p=P_o-P_oH^T(HP_oH^T+R)^{-1}HP_o$.

The {\em a posteriori} covariance matrix, which depends on measurement matrix $H$, represents a measure of the amount of information that an observable variable carries about unknown parameters. In this paper we will explore the role of the measurement matrix $H$ on the estimation procedure, providing the optimal design of a sensing device able to obtain the maximum amount of the information on the actual hand posture.

Let us preliminary introduce some useful notations. If $M$ is a symmetric matrix with dimension $n$, let its Singular Value Decomposition (SVD) be $M=U_M\Sigma_MU_M^T$, where $\Sigma_M$ is the diagonal matrix containing the singular values $\sigma_1(M)\geq\sigma_2(M)\geq\cdots\geq\sigma_n(M)$ of $M$ and $U_M$ is an orthogonal matrix whose columns $u_i(M)$ are the eigenvectors of $M$, known as Principal Components (PCs) of $M$, associated with $\sigma_i(M)$. For example, the SVD of the \emph{a priori} covariance matrix is $P_o=U_{P_o}\Sigma_{P_o}U_{P_o}^T$, with $\sigma_i(P_o)$ and $u_i(P_o)$, $i=1,2,\dots,n$, the singular values and the principal components of matrix $P_o$, respectively.

\section{Optimal Sensing Design}
\label{sec:OptimalEstimation}

We first analyze the case that individual sensing elements in the glove can be designed to measure a linear combination of joint angles (continuous sensing devices), and provide, for given \emph{a priori} information and fixed number of measurements, the optimal design, minimizing in average the reconstruction error. We then consider the case where each measure provided by the glove corresponds to a single joint angle (discrete sensing devices). For these types of gloves we determine which joint should be individually measured in order to optimize the design. Finally, we will consider the case that both continuous and discrete sensor elements are used in the achieve sensing devices, defining a procedure to obtain the optimal hybrid sensing glove design.

In the ideal case of noiseless measures ($R=0$), $P_p$ becomes zero when $H$ is a full rank $n$ matrix, meaning that available measures contain a complete information about the hand posture. In the real case of noisy measures and/or when the number of measurements $m$ is less than the number of DoFs $n$, $P_p$ can not be zero. In these cases, the following problem becomes very interesting: find the optimal matrix $H^*$ such that the hand posture information contained in the fewer number of measurements is maximized. Without loss of generality, we assume $H$ to be full row rank and we consider the following problem.
\begin{problem}
\label{prob:OptimalH1}
Let $H$ be an $m\times n$ full row rank matrix with $m<n$ and $V_1(P_o,H,R):\real^{m\times n}\rightarrow \real$ be defined as $V_1(P_o,H,R) = \|P_o-P_oH^T(HP_oH^T+R)^{-1}HP_o\|_F^2$, find
\begin{equation*}
	\label{eq:OptimalProblem}
	\begin{split}
	& H^* = \arg\min_H\, V_1(P_o,H,R)
	\end{split}
\end{equation*}
where $\|\cdot\|_F$ denotes the Frobenius norm defined as $\|A\|_F=\sqrt{\tr(A\,A^T)}$, for $A\in\real^{n\times n}$.
\end{problem} 
To solve problem~\ref{prob:OptimalH1} means to minimize the entries of the {\em a posteriori} covariance matrix: the smaller the values of the elements in $P_p$, the greater is the predictive efficiency. 

In order to simplify the analysis, in the following we will analyze separately the design of continuous, discrete and hybrid sensing devices.

\subsection{Continuous Sensing Design}

For this case, each row of the measurement matrix $H$ is a vector in $\real^n$ and hence can be given as a linear combination of a $\real^n$ basis. Without loss of generality, we can use the principal components of matrix $P_o$, i.e. the columns of the previously defined matrix $U_{P_o}$, as a basis of $\real^n$. Consequently the measurement matrix can be written as $H=H_eU_{P_o}^T$, where $H_e\in\real^{m\times n}$ contains the coefficients of the linear combinations.  Given that $P_o=U_{P_o}\Sigma_{P_o}U_{P_o}^T$, the \emph{a posteriori} covariance matrix becomes
\begin{equation}
	\label{eq:Pp_Hc}
	P_p = U_{P_o}\left[\Sigma_o-\Sigma_oH_e^T(H_e\Sigma_oH_e^T+R)^{-1}H_e\Sigma_o\right]U_{P_o}^T\,,
\end{equation}
where, for simplicity of notation $\Sigma_o\equiv \Sigma_{P_o}$. 

Next sections are dedicated to describe the optimal continuous sensing design both in a numerical and analytical way. For this purpose, let us introduce the set of $m\times n$ (with $m<n$) matrices with orthogonal rows, i.e.~satisfying the condition $HH^T=I_{m\times m}$, and let us denote it as $\mathcal{O}_{m\times n}$.

\subsubsection{Analytical Solutions}

We first consider the case of noiseless measures, i.e.~$R=0$. Let $A$ be a non-negative matrix of order $n$. It is well known (cf.~\cite{Rao64}) that, for any given matrix ${B}$ of rank $m$ with $m\leq n$,
\begin{equation}
	\label{eq:OptimalB}
	\min_{B}\|A-{B}\|_F^2=\alpha_{m+1}^2+\cdots+\alpha_n^2\,,
\end{equation}
where $\alpha_i$ are the eigenvalues of $A$, and the minimum is attained when
\begin{equation}
	\label{eq:OptimalChoiceB}
	{B}=\alpha_1{w_1w_1}^T+\cdots+\alpha_m{w_mw_m}^T\,,
\end{equation}	
where $w_i$ are the eigenvector of $A$ associated with $\alpha_i$.
In other words, the choice of ${B}$ as in~\eqref{eq:OptimalChoiceB} is the best fitting matrix of given rank $m$ to $A$. By using this result we are able to show when the minimum of~\eqref{eq:Pp_Hc}, hence of 
\begin{equation}
	\label{eq:CostHc}
\|\Sigma_o-\Sigma_oH_e^T(H_e\Sigma_oH_e^T)^{-1}H_e\Sigma_o\|_F^2\,,
\end{equation}
can be reached. Let us preliminary observe that the row vectors $(h_i)_e$ of $H_e$ can be chosen, without loss of generality, to satisfy the condition $(h_i)_e\,\Sigma_o\,(h_j)_e=0,\ i\neq j$, which implies that the measures are uncorrelated (\cite{Rao64}). Let $\mathcal{O}_{m\times n}$ denotes the set of $m\times n$ matrices, with $m<n$, whose rows satisfy the aforementioned condition, i.e.~the set of matrices with orthonormal rows ($H_eH_e^T=I$).  By using \eqref{eq:OptimalB}, the minimum of~\eqref{eq:CostHc} is obtained when (cf.~\cite{Rao64})
\begin{equation}
	\label{eq:OptimalHc}
\begin{aligned}
\Sigma_oH_e^T(H_e\Sigma_oH_e^T)^{-1}H_e\Sigma_o &= \sigma_1(\Sigma_o)u_1(\Sigma_o)u_1^T(\Sigma_o)+\cdots+\\
&+\sigma_m(\Sigma_o)u_m(\Sigma_o)u_m^T(\Sigma_o)\,.
\end{aligned}
\end{equation}
Since $\Sigma_o$ is a diagonal matrix, $u_i(\Sigma_o)\equiv e_i$,
where $e_i$ is the $i$\emph{-th} element of the canonical
basis. Hence, it is easy to verify that~\eqref{eq:OptimalHc} holds for $H_e=[I_{m}\, |\, 0_{m\times(n-m)}]$. As a consequence, row vectors $(h_i)$ of $H$ are the first $m$ principal components of $P_o$, i.e.~$(h_i)_=u_i(P_o)^T$, for $i=1,\dots,m$.

From these results, a principal component can be defined as a linear combination of optimally-weighted observed variables meaning that the corresponding measures can account for a maximal amount of variance in the data set. As reported in~\cite{Rao64}, every set of $m$ optimal measures can be considered as a representation of points in the best fitting lower dimensional subspace. Thus the first measure gives the
best one--dimensional representation of data set, the first two measures give the best two--dimensional representation, and so on.

In the noisy measurement case, \eqref{eq:OptimalHc} can be rewritten as  
\[
\begin{aligned}
\Sigma_oH_e^T(H_e\Sigma_oH_e^T+R)^{-1}H_e\Sigma_o &- \sigma_1(\Sigma_o)u_1(\Sigma_o)u_1^T(\Sigma_o)+\cdots+\\
&+\sigma_m(\Sigma_o)u_m(\Sigma_o)u_m^T(\Sigma_o) = \Delta
\end{aligned}
\]
In this case, $\Delta = 0$ can not be attained for any finite $H$: indeed, for unconstrained $H$, $\inf_{H} V_1(P_0,H,R)$ would be attained for $\|H\| \rightarrow \infty$, i.e.~for infinite signal-to-noise ratio. The problem can be recast in a well--posed form by imposing a constraint on the magnitude of the measurement matrix. Up to a possible renormalization of $R$, we can search the optimum design in the set $\mathcal{A}=\{H\,:\,HH^T=I_m\}$. This problem was discussed and solved in~\cite{Kostas93}, showing that, for arbitrary noise covariance matrix $R$,
\begin{equation}
	\label{eq:MinFuncV1_Noise}
	\min_{H\in\mathcal{A}}V_1(H)=\sum_{i=1}^m\frac{\sigma_i(P_o)}{1+\sigma_i(P_o)/\sigma_{m-i+1}(R)}+\sum_{i=m+1}^n\sigma_i(P_o)\,,
\end{equation}
which is attained for
\begin{equation}
	\label{eq:OptimalH_Noise}
	H=\sum_{i=1}^mu_{m-i+1}(R)u_i(P_o)\,.
\end{equation}
Hence, if $\mathcal{A}$ consists of all matrices with mutually
perpendicular, unit length rows, the first $m$ principal components of $P_o$ are still the optimal choice for $H$ rows. The alternative case that the solution is sought under a Frobenius norm constraint on $H$, i.e.~$\mathcal{A}=\{H:\|H\|_F\leq 1\}$ is discussed in~\cite{Kostas93}.

\subsubsection{Numerical Solution: Gradient flows on $\mathcal{O}_{m\times n}$}

In this subsection we describe a different approach to the solution of problem~\ref{eq:OptimalProblem}, which consists of constructing a differential equation whose trajectories converge to the desired optimum. The method lends itself directly to efficient numerical implementations. Although a closed-form solution has been proposed in the previous subsection, the numerical solution considered here is very useful when constraints are imposed on the measurement structure (as they will be for instance in the hybrid sensor design), where closed form solutions are not applicable.
 
The following proposition describes an algorithm that minimizes the cost function $V_1(P_o,H,R)$, providing the gradient flow which will be useful in the method of steepest descent.
\begin{proposition}
\label{prop:GradientFlow}
The gradient flow for the function $V_1(P_o,H,R):\real^{m\times n}\rightarrow \real$ is given by,
\begin{equation}
	\label{eq:GradientFlow}
	\dot H = -\nabla \|P_p\|_F^2= 4\left[P_p^2P_oH^T\Sigma(H)\right]^T\,,
\end{equation}
where $\Sigma(H) = (HP_oH^T+R)^{-1}$.
\end{proposition}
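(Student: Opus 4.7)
The plan is to compute the differential $dV_1 = d\|P_p\|_F^2$ as a linear functional of $dH$ and then read off the gradient. Since $P_p$ is symmetric we have $\|P_p\|_F^2 = \tr(P_p^2)$, hence $dV_1 = 2\,\tr(P_p\, dP_p)$, and the work reduces to finding a clean expression for $dP_p$.

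Writing $\Sigma = \Sigma(H) = (HP_oH^T+R)^{-1}$ and holding $P_o$ and $R$ constant, I would differentiate $P_p = P_o - P_oH^T\Sigma HP_o$ term by term. The derivative of the inverse is handled via $d(A^{-1}) = -A^{-1}(dA)A^{-1}$ applied to $A = HP_oH^T+R$, giving $d\Sigma = -\Sigma\bigl((dH)P_oH^T + HP_o(dH)^T\bigr)\Sigma$. Introducing the shorthand $M = \Sigma HP_o$ (so that $M^T = P_oH^T\Sigma$) and $K = P_oH^T\Sigma HP_o$, the two direct contributions together with the two sub-terms coming from $-P_oH^T(d\Sigma)HP_o$ regroup as $dP_p = M^T(dH)(K-P_o) + (K-P_o)(dH)^T M$. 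The crucial step is noting that $K = P_o - P_p$, which is just the definition of $P_p$ rewritten; this collapses the expression to the compact form $dP_p = -M^T(dH)P_p - P_p(dH)^T M$.

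Plugging into $dV_1 = 2\,\tr(P_p\, dP_p)$ and using cyclic invariance of the trace together with the identity $\tr\bigl(A\,(dH)^T\bigr) = \tr(A^T\,dH)$ makes the two resulting summands identical, producing $dV_1 = -4\,\tr(P_p^2 M^T\, dH)$. Matching this against $dV_1 = \tr\bigl((\nabla V_1)^T\, dH\bigr)$ yields $\nabla V_1 = -4\,(P_p^2 M^T)^T = -4\,[P_p^2 P_o H^T \Sigma(H)]^T$, so $\dot H = -\nabla V_1$ agrees with the claimed formula.

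The main obstacle is the bookkeeping in the middle step: four summands appear (two from the direct $H^T$ and $H$ factors, and two more from the expansion of $d\Sigma$), and they must be grouped with the correct signs. The non-obvious move is to anticipate the collapsing identity $P_oH^T\Sigma HP_o = P_o - P_p$; without it one is left with an unwieldy sum, whereas invoking it reduces $dP_p$ to two symmetric terms whose trace against $P_p$ is straightforward to simplify to the claimed single-term expression.
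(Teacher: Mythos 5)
Your proof is correct and follows essentially the same route as the paper's appendix: differentiate $P_p$ via $d(A^{-1})=-A^{-1}(dA)A^{-1}$, substitute into $2\tr(P_p\,dP_p)$, and collapse the result using $P_oH^T\Sigma(H) HP_o = P_o - P_p$ together with the symmetry of $P_o$, $P_p$ and $\Sigma(H)$. The only difference is organizational: you apply the collapsing identity to $dP_p$ before taking traces, which reduces the paper's four-term trace bookkeeping to two symmetric terms, but the underlying computation is the same.
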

\begin{proof} See Appendix.
\end{proof}

Let us observe that rows of matrix $H$ can be chosen, without loss of generality, such that $H_iP_oH_j^T=0,\ i\neq j$ which imply that measures are uncorrelated, i.e.~satisfying the condition $HH^T=I_m$. Of course, in case of noise--free sensors, this constraint is not strictly necessary. On the other hand, in case of noisy sensors, the minimum of $V_1(P_o,H,R)$ can not be obtained since it represents a limit case that can be achieved when $H$ becomes very large (i.e.~an infimum) and hence increasing the signal-to-noise ratio.

A reasonable solution for the constrained problem will be provided by using the Rosen's gradient projection method for linear constraints~\cite{Rosen}, which is based on projecting the search direction into the subspace tangent to the constraint. Hence, given the steepest descent direction for the unconstrained problem, this method consists on finding the direction with the most negative directional derivative which satisfies the constraint on the structure of the matrix $H$, i.e.~$HH^T=I_{m}$. This can be obtained by using the  projection matrix
\begin{equation}
\label{eq:Proj}
W = I_{m} - H(H^TH)^{-1}H^T\,,
\end{equation}
and then projecting the unconstrained gradient flow~\eqref{eq:GradientFlow} into the subspace tangent to the constraint, obtaining the search direction
\begin{equation}
	\label{eq:SearchDir}
	s = W\ \nabla \|P_p\|_F^2\,.
\end{equation}

Having the search direction for the constrained problem, the gradient flow is given by
\begin{equation}
	\label{eq:ConstrGradientFlow}
	\dot H = -4W\ \left[P_p^2P_oH^T\Sigma(H)\right]^T
\end{equation}
where $\Sigma(H) = (HP_oH^T+R)^{-1}$. The gradient flow~\eqref{eq:GradientFlow} guarantees that the optimal solution $H^*$ will satisfy $H^*(H^*)^T=I_m$, if $H(0)$ satisfies $H(0)H(0)^T=I_m$, i.e.~$H\in\mathcal{O}_{m\times n}$.

Notice that both $\mathcal{O}_{m\times n}$ and $V_1(P_o,H,R)$ are not convex, hence the problem could not have a unique minimum. However, in case of noise--free measures, the invariance of the cost function w.r.t.~changes of basis, i.e.~$V_1(P_o,H,0)=V_1(P_o,MH,0)$ with $M\in\real^m$ a full rank matrix, suggests that there exists a subspace in $\real^n$ where the optimum is achieved. Indeed, gradients become zero when rows of matrix $H$ are any linear combination of a subset of $m$ principal components of the {\em a priori} covariance matrix. Unfortunately, this does not happen in case of noisy measures and gradients become zero only for a particular matrix $H$ which depends also on the principal components of the noise covariance matrix.

\subsection{Discrete Sensing Design}
 
When each measure $y_j$, $j=1,\dots,m$ provided by the glove corresponds to a single joint angle $x_i$, $i=1,\dots,n$, the problem is to find the optimal choice of $m$ joints or DoFs to be measured. 

Measurement matrix becomes in this case a full row rank matrix where each row is a vector of the canonical basis, i.e.~matrices which have exactly one nonzero entry in each row.

Let $\mathcal{N}_{m\times n}$ denote the set of $m\times n$ element-wise non-negative matrices, then $\mathcal{P}_{m\times n}=\mathcal{O}_{m\times n}\cap\mathcal{N}_{m\times n}$, where $\mathcal{P}_{m\times n}$ is the set of $m\times n$ permutation matrices (see lemma~2.5 in \cite{Pappas}). This result implies that if we restrict $H$ to be orthonormal and element-wise non-negative, we get a permutation matrix. In this paper we extend this result in $\real^{m\times n}$, obtaining matrices which have exactly one nonzero entry in each row. Hence, the problem to solve becomes:

\begin{problem}
\label{prob:OptimalH2}
Let $H$ be a $m\times n$ matrix with $m<n$, and $V_1(P_o,H,R):\real^{m\times n}\rightarrow \real$ be defined as $V_1(P_o,H,R) = \|P_o-P_oH^T(HP_oH^T+R)^{-1}HP_o\|_F^2$, find the optimal measurement matrix
\begin{equation*}
	\label{eq:OptimalProblem2}
	\begin{split}
	& H^* = \arg\min_{H}\, V_1(P_o,H,R)\\
	& s.t.\quad H\in\mathcal{P}_{m\times n}\,.
	\end{split}
\end{equation*}
\end{problem}

In this case a closed-form solution is not available. Nonetheless, as the model hand adopted has usually a low number of DoFs, the optimal choice $H^*$ can be computed by exhaustion, substituting all possible sub--sets of $m$ vectors of the canonical basis in the cost function $V_1(P_o,H,R)$. In next section, a more general approach to computing the optimal matrix will be provided in order to obtain a result also when a model with a large number of DoFs is considered.

\subsubsection{Numerical Solution: Gradient Flows on $\mathcal{P}_{m\times n}$} 

In this section, we describe an alternative approach to the solution of problem~\ref{eq:OptimalProblem2} based on a gradiental method. Once again, although the enumeration approach can solve the problem in practical cases, the numerical solution based on the method here presented will be useful in the design of hybrid sensors.

A numerical solution for problem~\ref{prob:OptimalH2} can be obtained following a method presented in~\cite{Pappas}, which consists in defining a function $V_2(P)$ with $P\in\real^{n\times n}$ that forces the entries of $P$ to be as positive as possible, thus penalizing negative entries of $H$. In this paper, we extend this function to measurement matrices $H\in\real^{m\times n}$ with $m<n$. Consider a
function $V_2:\mathcal{O}_{m\times n}\rightarrow \real$ as
\begin{equation}
\label{eq:PermutationIndex}
V_2(H) = \frac{2}{3}\tr\left[H^T(H-(H\circ H))\right]\,,
\end{equation}
where $A\circ B$ denotes the \emph{Hadamard} or elementwise product of the matrices $A=(a_{ij})$ and $B=(b_{ij})$, i.e. $A\circ B=(a_{ij}b_{ij})$. The gradient flow of $V_2(H)$ is given by (\cite{Pappas})
\begin{equation}
\label{eq:GradPermutationIndex}
\dot H = -H\left[(H\circ H)^TH-H^T(H\circ H)\right]\,,
\end{equation}
which minimizes $V_2(H)$ converging to a permutation matrix if $H(0)\in\mathcal{O}_{m\times n}$.

The two gradient flows given by~\eqref{eq:GradientFlow}
and~\eqref{eq:GradPermutationIndex}, both defined on the space of orthogonal matrices, tend to respectively minimize their cost functions. By combining these two gradient flows we can achieve a solution for Problem~\ref{prob:OptimalH2}. An interesting result applies to the dynamics of the convex combination of these gradients, which can be stated as follows.

\begin{theorem}
\label{thm:FinalGradientFlow}
Let $H\in\real^{m\times n}$ with $m<n$ be the measurement process matrix and assume that $H(0)\in\mathcal{O}_{m\times n}$. Moreover, suppose that $H(t)$ satisfies the following matrix differential equation,
\begin{align}
	\label{eq:FinalGradientFlow}
        \dot H &= 4\,(1-k)W\ \left[P_p^2P_oH^T\Sigma(H)\right]^T+\nonumber\\
        &+k\ H\left[(H\circ H)^TH-H^T(H\circ H)\right]\,,
\end{align}
where $k\in[0,\,1]$ is a positive constant and $\Sigma(H) = (HP_oH^T+R)^{-1}$. For sufficiently large $k$, $\lim_{t\rightarrow \infty} H(t)=H_\infty$ exists and approximates a permutation matrix that also (locally) minimizes the squared Frobenius norm of the
\emph{a posteriori} covariance matrix, $\|P_p\|_F^2$.
\end{theorem}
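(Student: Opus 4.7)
My plan is to interpret \eqref{eq:FinalGradientFlow} as a projected Riemannian gradient flow on the compact manifold $\mathcal{O}_{m\times n}$ for the convex combination
$L_k(H) := (1-k)\,V_1(P_o,H,R) + k\,V_2(H)$, and then combine LaSalle's invariance principle with a perturbation argument around the minima of $V_2$. First, I would verify forward--invariance of $\mathcal{O}_{m\times n}$. The first summand of \eqref{eq:FinalGradientFlow} is tangent to $\mathcal{O}_{m\times n}$ at $H$ by construction, via the projector $W$ of \eqref{eq:Proj}; the second summand is tangent because the bracket $(H\circ H)^T H - H^T(H\circ H)$ is skew--symmetric, which yields $\dot H H^T + H \dot H^T = 0$ as in \cite{Pappas}. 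Since any convex combination of tangent vectors is tangent, $H(0)\in\mathcal{O}_{m\times n}$ forces $H(t)\in\mathcal{O}_{m\times n}$ for all $t\ge 0$. The two summands are, up to absorbable positive constants, the $\mathcal{O}_{m\times n}$--projected gradients of $-V_1$ and $-V_2$, so
\[
\dot L_k \;=\; \langle \mathrm{grad}_{\mathcal{O}} L_k,\,\dot H\rangle \;=\; -\|\dot H\|_F^2 \;\le\; 0 .
\]
Thus $L_k$ is a smooth, bounded--below Lyapunov function on the compact set $\mathcal{O}_{m\times n}$, and LaSalle's invariance principle yields that $H(t)$ converges to the critical set of $L_k$ on $\mathcal{O}_{m\times n}$, giving the existence of $H_\infty=\lim_{t\to\infty}H(t)$; only local minima can be asymptotically stable under a generic initial condition, so $H_\infty$ is a local minimum.

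Next I would localize $H_\infty$ near $\mathcal{P}_{m\times n}$ when $k$ is close to $1$. At $k=1$, $L_1=V_2$, whose strict local minima on $\mathcal{O}_{m\times n}$ are exactly the elements of $\mathcal{P}_{m\times n}$, each with non--degenerate restricted Hessian (a fact already used in \cite{Pappas}). Writing $L_k = k\bigl(V_2 + \tfrac{1-k}{k}V_1\bigr)$ displays $L_k$ as an $O(1-k)$ perturbation of $V_2$, so the implicit function theorem applied to the critical--point equation $\mathrm{grad}_{\mathcal{O}} L_k = 0$ produces, for every $H^\star\in\mathcal{P}_{m\times n}$ and every $k$ sufficiently close to $1$, a unique local minimum $H_k^\star$ of $L_k$ lying in an $O(1-k)$ neighborhood of $H^\star$. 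The first--order expansion $L_k(H_k^\star) = (1-k)\,V_1(H^\star) + O((1-k)^2)$ shows that the smallest values of $L_k$ occur at perturbations of those $H^\star$ that locally minimize $V_1$ over the finite set $\mathcal{P}_{m\times n}$. Combining this with the Lyapunov descent of the first paragraph forces $H_\infty$ to coincide with one such $H_k^\star$; hence $H_\infty$ approximates a permutation matrix that also locally minimizes $\|P_p\|_F^2$, which is the conclusion of the theorem.

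The main obstacle is making the word \emph{approximates} quantitative. This requires (a) uniform non--degeneracy of the restricted Hessian of $V_2$ along $\mathcal{P}_{m\times n}$, inherited from \cite{Pappas}, and (b) a bound, expressed through the spectral gaps of $P_o$ and $R$, on the Lipschitz constant of the projected gradient of $V_1$ on a tubular neighborhood of $\mathcal{P}_{m\times n}$ in $\mathcal{O}_{m\times n}$; together these determine how large $k$ must be chosen to guarantee a prescribed bound on $\|H_\infty - H^\star\|_F$. A secondary, more routine, step is ruling out convergence to saddle points of $L_k$, which follows from the stable manifold theorem applied to the generically hyperbolic equilibria of \eqref{eq:FinalGradientFlow}.
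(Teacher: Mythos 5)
The paper never actually proves this theorem: it states that the proof ``is a direct extension of results in \cite{Pappas} and is omitted for brevity,'' so there is no argument to compare against line by line. Your sketch reconstructs precisely the route that citation points to --- forward invariance of $\mathcal{O}_{m\times n}$, Lyapunov descent of the convex combination $L_k=(1-k)V_1+kV_2$, and a perturbation of the $k=1$ minima, which are the permutation matrices --- and that is the right architecture. The second summand's tangency via skew-symmetry of $(H\circ H)^TH-H^T(H\circ H)$ is correct and is exactly the Zavlanos--Pappas computation.

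There are, however, three genuine gaps. First, you declare the first summand ``tangent by construction via $W$,'' but $W=I_m-H(H^TH)^{-1}H^T$ from \eqref{eq:Proj} is an $m\times m$ matrix acting on the left; it is not the tangent-space projector of the manifold $\{H:HH^T=I_m\}$, whose tangent space at $H$ is $\{Z:ZH^T+HZ^T=0\}$, and for $m<n$ the matrix $H^TH$ is singular, so the formula must be reinterpreted before either forward invariance or the identity $\dot L_k=-\|\dot H\|_F^2$ can be asserted (that identity also requires the two summands to be the \emph{Riemannian} gradients with exactly the weights $(1-k)$ and $k$, not merely descent directions ``up to absorbable constants'' with a left projector in front of one of them). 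Second, LaSalle gives convergence to the critical \emph{set}, not existence of a single limit $H_\infty$; to get $\lim_{t\to\infty}H(t)$ you need isolated critical points or a {\L}ojasiewicz--Simon argument (available here since $V_1$ and $V_2$ are real-analytic on the compact set $\mathcal{O}_{m\times n}$), and this should be said. Third, and most substantively, the step ``the smallest values of $L_k$ occur at perturbations of the $H^\star$ minimizing $V_1$ \ldots combining this with the Lyapunov descent forces $H_\infty$ to coincide with one such $H_k^\star$'' is a non sequitur: a descent flow converges to whichever basin contains its initial condition, not to the critical point of smallest value, and for $k$ near $1$ \emph{every} element of $\mathcal{P}_{m\times n}$ spawns a nearby strict local minimum of $L_k$ because the nondegenerate Hessian of $V_2$ dominates. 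Hence the limit need not lie near a permutation matrix that is good for $V_1$ at all --- which is exactly why the paper prescribes multi-start and increasing $k$. The most your argument delivers is the theorem's weak ``(locally) minimizes'' conclusion, and the middle of your second paragraph claims more than that.
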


The proof of this theorem is a direct extension of results
in~\cite{Pappas}, and is omitted for brevity.  

As in most numerical optimization algorithms, the non-convex nature of the cost function and of the support set implies the need for multi-start approaches. A possible technique to help converge towards the global optimum consists in increasing $k$ during the search procedure (cf.~\cite{Pappas}).

\subsection{Hybrid Sensing Design}

In this section we analyze the sensing device with both continuous and discrete sensors. Up to rearranging the sensor numbering, we can write a hybrid measurement matrix
$H_{c,d}\in\real^{m\times n}$ as
\[
H_{c,d}=
\left[
\begin{array}{c}
	H_c\\
	\cline{1-1}
	H_d
\end{array}
\right]\,,
\]
where $H_c\in\real^{m_c\times n}$ defines the $m_c$ continuous sensing elements, whereas $H_d\in{\mathcal{P}}^{m_d\times n}$ describes the $m_d$ single-joint measurements, with $m_c+m_d=m$. Neither the closed-form solution valid for continuous sensing design, nor the exhaustion method used for discrete measurements are applicable in the hybrid case. Therefore, to optimally design hybrid pose sensing systems, we will recur to gradient-based iterative optimization algorithms. 

We first consider the case that noise is negligible ($R\approx 0$). By combining the continuous and discrete gradient flows, previously defined in \eqref{eq:GradientFlow} and
\eqref{eq:GradPermutationIndex}, respectively, we obtain
\begin{align}
\label{eq:HybridGradientFlow}
\dot H_{c,d} &= 4\,(1-k)\ \left[P_p^2P_oH_{c,d}^T\Sigma(H_{c,d})\right]^T+\nonumber\\
&+k\ \bar H_d\left[(\bar H_d\circ \bar H_d)^T\bar H_d-\bar H_d^T(\bar H_d\circ \bar H_d)\right]\,,
\end{align}
where $k\in[0,\,1]$ is a positive constant, $\Sigma(H_{c,d}) =
(H_{c,d}P_oH_{c,d}^T)^{-1}$, and
\[
\bar H_d= \left[
	\begin{array}{c}
		0_{m_c\times n}\\
		\cline{1-1}
		H_d
	\end{array}
	\right]\,.
\]

On the basis of Theorem~\ref{thm:FinalGradientFlow}, the gradient flow defined in~\eqref{eq:HybridGradientFlow} converges toward a hybrid sensing system (locally) minimizing the squared Frobenius norm of the {\em a posteriori} covariance matrix. Multi--start strategies have to be used to circumvent the problem of local minima.\\

When noise is not negligible, the gradient search method
of~\eqref{eq:HybridGradientFlow} would tend to produce measurement matrices whose continuous parts, $H_{c}$, are very large in norm. This is an obvious consequence of the fact that, for a fixed noise covariance $R$, larger measurement matrices $H$ would produce an apparently higher signal-to-noise ratio in~\eqref{eq:MeasuresGlove}. 

This problem can be circumvented by constraining the solution in the sub-set $\mathcal{H}_{c,d}=\{H_{c,d}:H_{c} H_{c}^T = I_{m_c}\}$. A solution for this problem can be obtain by the following gradient flow
\begin{align}
\label{eq:HybridGradientFlowN}
\dot H_{c,d} &= 4\,(1-k)\,W_{c,d}\ \left[P_p^2P_oH_{c,d}^T\Sigma(H_{c,d})\right]^T+\nonumber\\
&+k\ \bar H_d\left[(\bar H_d\circ \bar H_d)^T\bar H_d-\bar H_d^T(\bar H_d\circ \bar H_d)\right]\,,
\end{align}
where $k\in[0,\,1]$ is a positive constant,
$P_p=P_o-P_oH_{c,d}^T(H_{c,d}P_oH_{c,d}^T+R)^{-1}H_{c,d}P_o$, and
$\Sigma(H_{c,d})=(H_{c,d}P_oH_{c,d}^T +R)^{-1}$. With the choice 
\[
W_{c,d}=
\begin{bmatrix}
I_{m_c}-H_c(H_c^TH_c)^{-1}H_c^T & 0_{m_c\times m_d}\\
0_{m_d\times m_c} & I_{m_d\times m_d}
\end{bmatrix}
\]
for the projection matrix, and starting from any initial guess matrix $H_{c,d}\in\mathcal{H}_{c,d}$, the gradient flow remains in the sub-set $\mathcal{H}_{c,d}$, and converges to a (local) minimum for the problem. Also in this case, multi–start strategies can circumvent the problem of local minima.

\section{Results}
\label{OptRis}

In this section we will describe how the information available by measurement process increases with the minimization of the squared Frobenius norm of the {\em a posteriori} covariance matrix as well as increasing the number of measures, leading to better estimation performance.

First, based on the {\em a priori} covariance matrix obtained with the {\em a priori} data set described in section 3 of~\cite{Bianchi_etalI}, we will show the optimal distribution of sensors on the hand in case of continuous and discrete sensing devices. We will also show that, although the number of measures used with the optimal matrix is less than the five measures available by matrix $H_s$ (cf.~\cite{Bianchi_etalI}), the hand posture information achievable with the optimal measurement matrix $H^*_d$ related to a discrete sensing device, is greater, i.e.~$V_1(H^*_d)<V_1(H_s)$, leading to a better hand pose estimation performance.

Second, we will compare the hand posture reconstruction obtained by means of matrix $H_s$ with the one obtained by using the optimal matrix $H^*_d$ with the same number of measures. Additional random normal noise $\nu$ with standard deviation of $7^{\circ}$ on each measure is also considered to evaluate the performance in case of noisy measures.

\subsection{Continuous, Discrete and Hybrid Sensing Distribution}

As shown in section~\ref{sec:OptimalEstimation}, in case of continuous sensing design, the optimal choice $H^*_c$ of the measurement matrix $H\in\real^{m\times n}$ is represented by the first $m$ principal components (synergies) of the {\em a priori} covariance matrix $P_o$. Figure~\ref{fig:ContinuousSensorDistribution} shows the hand sensor distribution related to each synergy.

\begin{figure}[t!]
\begin{center}
\includegraphics[width=0.9\textwidth]{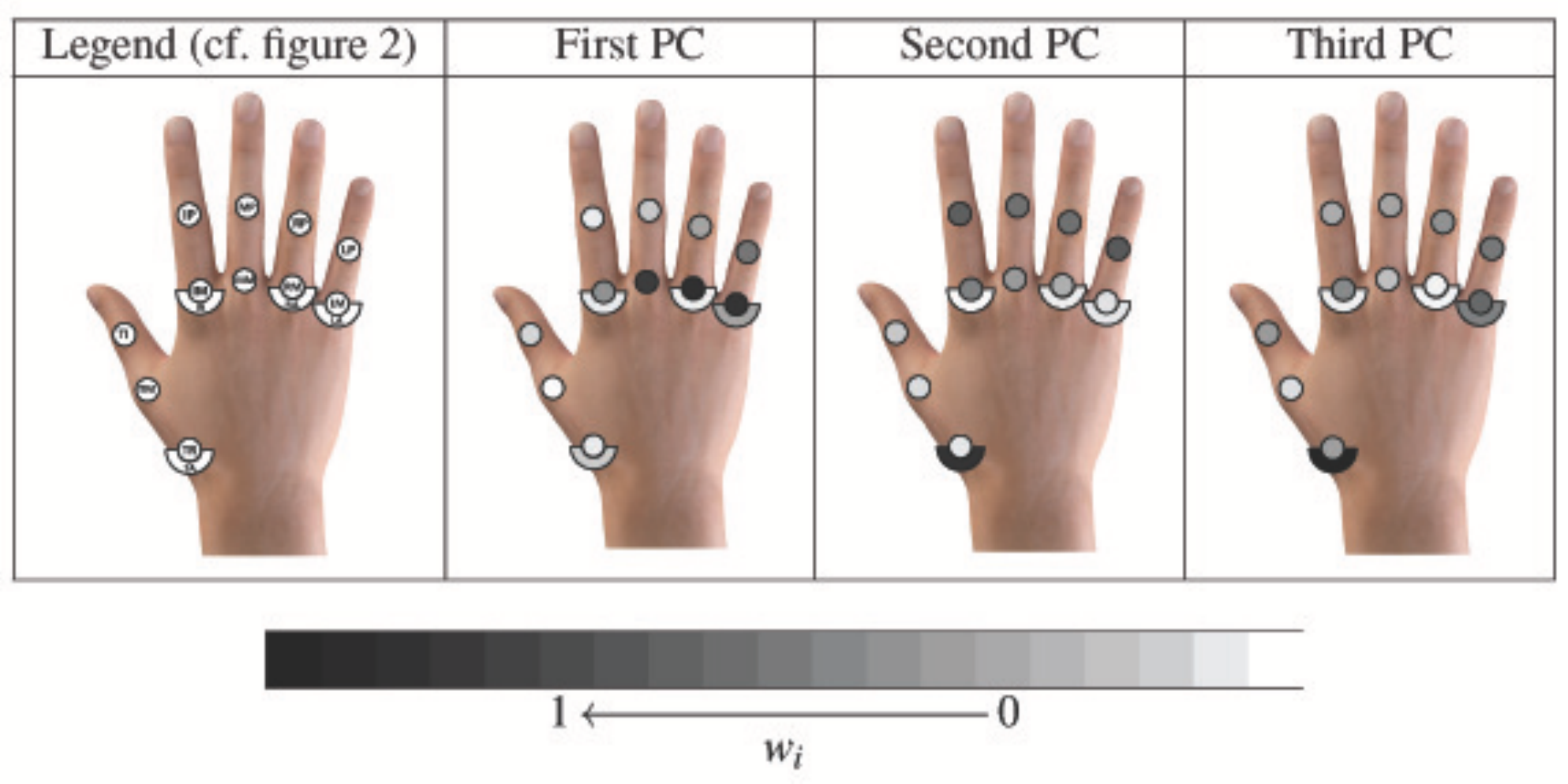}
\end{center}
\caption{Optimal continuous sensing distribution for the first PCs of $P_o$. The greater is the absolute coefficient $w_i$ of the joint angle in the PC, the darker is the color of that joint. We assume the coefficient of the $i$-{\em th} joint in the PC to be normalized w.r.t.~the maximum absolute value of the coefficients that can be achieved all over the joints.}
\label{fig:ContinuousSensorDistribution}
\end{figure}

In case of discrete sensing, the optimal measurement matrix $H^*_d$, related to a discrete sensing device, for a number of noise--free measures $m$ ranging from 1 to 14, is reported in table~\ref{tab:OptimalSelMatrix}. Notice that, $H^*_d$ does not have an incremental behaviour, especially in case of few measures. In other words, the set of DoFs which have to be chosen in case of $m$ measures does not necessarily contain all the set of DoFs chosen for $m-1$ measures. Moreover, noise randomness can slightly change which DoFs have to be measured compared with the noise--free case.

\begin{table}[t!]
	\renewcommand{\arraystretch}{1}
	\centering
	\small{
\begin{tabular}{|@{ }c@{ }||@{ }c@{ }|@{ }c@{ }|@{ }c@{ }|@{ }c@{ }|@{ }c@{ }|@{ }c@{ }|@{ }c@{ }|@{ }c@{ }|@{ }c@{ }|@{ }c@{ }|@{ }c@{ }|@{ }c@{ }|@{ }c@{ }|@{ }c@{ }|@{ }c@{ }||@{ }c@{ }|}
	\hline
m & TA & TR & TM & TI & IA & IM & IP & MM & MP & RA & RM & RP & LA & LM & LP & $V_1$\\
\hline
\hline
1 & & & & & & & & & & & X & & & & & $7.12\cdot10^{-2}$\\
\hline
2 & & & & & & & X & & & & X & & & & & $2.39\cdot10^{-2}$\\
\hline
3 & X & & & & & & & & & & X & X & & & & $6.59\cdot10^{-3}$\\
\hline
4 & X & & & & & & & X & & & & X & & X & & $3.30\cdot10^{-3}$\\
\hline
5 & X & & & & & & & X & & & & X & X & X & & $1.90\cdot10^{-3}$\\
\hline
6 & X & & & X & & & & X & & & & X & X & X & & $5.32\cdot10^{-4}$\\
\hline
7 & X & & & X & & & & X & X & & & & X & X & X & $2.92\cdot10^{-4}$\\
\hline
8 & X & & & X & X & & & X & X & & & & X & X & X & $1.98\cdot10^{-4}$\\
\hline
9 & X & & & X & X & & X & X & & & & X & X & X & X & $1.30\cdot10^{-4}$\\
\hline
10 & X & & & X & X & X & X & X & & & & X & X & X & X & $6.86\cdot10^{-5}$\\
\hline
11 & X & X & & X & X & X & X & X & & & & X & X & X & X & $2.70\cdot10^{-5}$\\
\hline
12 & X & X & & X & X & X & X & X & X & & & X & X & X & X & $1.40\cdot10^{-5}$\\
\hline
13 & X & X & X & X & X & X & X & X & X & & & X & X & X & X & $3.39\cdot10^{-6}$\\
\hline
14 & X & X & X & X & X & X & X & X & X & X & & X & X & X & X & $1.32\cdot10^{-6}$\\
\hline
\end{tabular}}
\caption{Optimal measured DoFs for $H^*_d$ with increasing number of noise--free measures $m$ (cf.~figure~\ref{fig:KinMod}).}
\label{tab:OptimalSelMatrix}
\end{table}

\begin{figure}[!t]
	\centering
	\begin{tabular}[c]{c}
		\includegraphics[width=0.4\columnwidth]{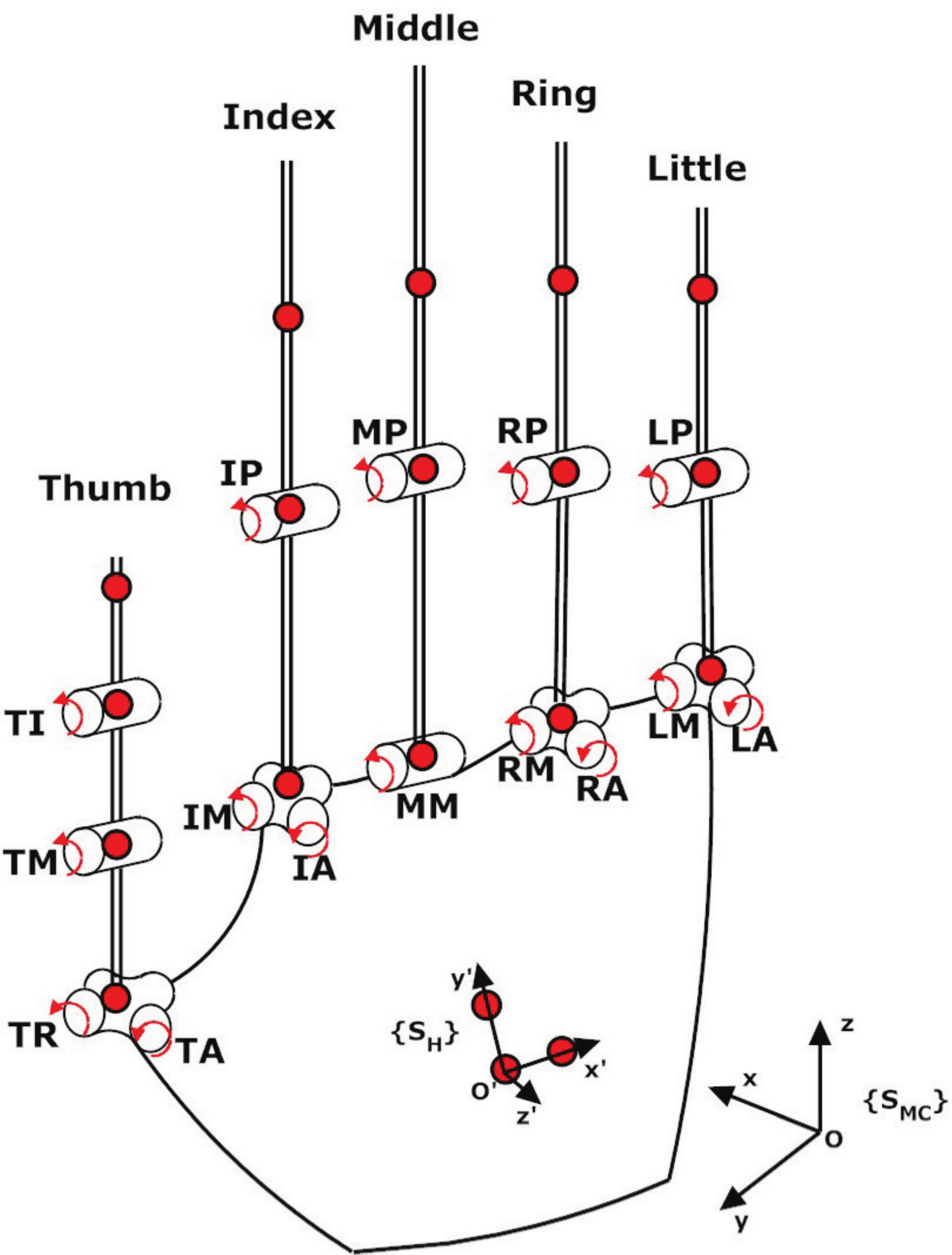}
	\end{tabular}
	\hspace{-.4cm}
	\renewcommand{\arraystretch}{1.05}
	\tiny{
	\begin{tabular}[c]{|c|c|}
		\hline
		\textbf{DoFs} & \textbf{Description}\\
		\hline
		TA & Thumb Abduction\\
		\hline
		TR & Thumb Rotation\\
		\hline
		TM & Thumb Metacarpal\\
		\hline
		TI & Thumb Interphalangeal\\
		\hline
		IA & Index Abduction\\
		\hline
		IM & Index Metacarpal\\
		\hline
		IP & Index Proximal\\
		\hline
		MM & Middle Metacarpal\\
		\hline
		MP & Middle Proximal\\
		\hline
		RA & Ring Abduction\\
		\hline
		RM & Ring Metacarpal\\
		\hline
		RP & Ring Proximal\\
		\hline
		LA & Little abduction\\
		\hline
		LM & Little Metacarpal\\
		\hline
		LP & Little Proximal\\
		\hline
	\end{tabular}}
	\caption{Kinematic model of the hand with 15 DoFs. Markers are reported as red spheres.}
\label{fig:KinMod}
\end{figure}

Figure~\ref{fig:Indtot} shows the values of the square Frobenius norm of the {\em a posteriori} covariance matrix for increasing number $m$ of noise--free measures.
The best performance is obtained by the continuous sensing design, as aspected. Indeed, principal components are considered the optimal measures for the representation of points in the best fitting lower dimensional subspace~\cite{Rao64}.
The hybrid performance is better than the discrete one, thus representing a trade-off between the quality of estimation of the continuous sensing design and feasibility and costs of the discrete one. Moreover, $V_1$ values decrease with the number of measures, tending to be zero (cf.~figure~\ref{fig:Indtot}). This fact is trivial because increasing the measurements, the uncertainty on the measured variables is reduced. When all the measured information is available $V_1$ assumes zero value with perfectly accurate measures. In case of noisy measures, $V_1$ values decrease with the number of measures tending to a value which is larger, depending on the level of noise.
\setcounter{figure}{3}
\begin{figure}[t!]
\centering
\includegraphics[width=1\columnwidth]{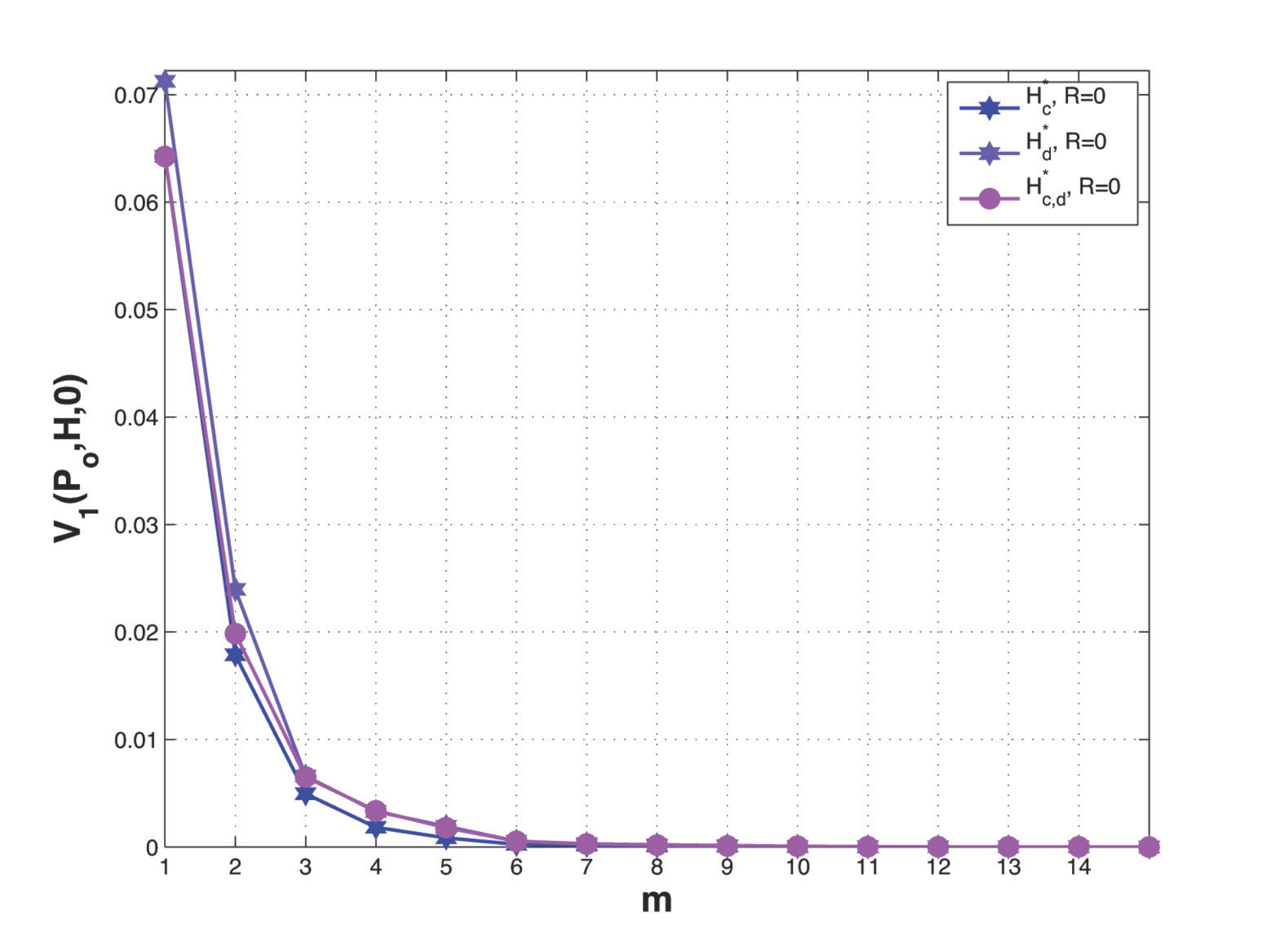}
\caption{Squared Frobenius norm of the \emph{a posteriori} matrix with noise--free measures in case of $H^*_{c}$, $H^*_{d}$ and $H^*_{c,d}$ ($m_c=1$).}
\label{fig:Indtot}
\end{figure}

For noise--free measures, if we analyze how much $V_1$ reduces with the number of measurements w.r.t.~the value it assumes for one measure, reduction percentage with three measured DoFs is greater than 80\%.
This result suggests that with only three measurements, the optimal matrix can furnish more than 80\% of uncertainty reduction. This is equivalent to say that a reduced number of measurements is sufficient to guarantee a good hand posture estimation. In~\cite{Santelloart} and \cite{Gabicciniart}, under the \emph{controllability} point of view, authors state that three postural synergies are crucial in grasp pre-shaping as well as in grasping force optimization since they take into account for more than 80\% of variance in grasp poses. Here, the same result can be obtained in terms of measurement process, i.e.~from the {\em observability} point of view: a reduced number of measures coinciding with the first three principal components enable for more than 80\% reduction of the squared Frobenius norm of the \emph{a posteriori} covariance matrix.

The above reported result seems logic considering the duality between observability and controllability. Moreover, under an engineering point of view, it is reasonable that those actuators which are used the most being also the most monitored and hence the most sensor endowed. 

\section{Discussion}

In this section, we will compare the hand posture reconstruction obtained by applying the hand pose reconstruction techniques described in~\cite{Bianchi_etalI} to $m=5$ measures provided by matrix $H_s$ and by optimal matrix $H^*_d$.

\setcounter{figure}{4}
\begin{figure*}[t!]
\begin{center}
	\renewcommand{\arraystretch}{1}
\begin{tabular}[c]{|@{}c@{}|c|c|}
	\hline
\small{ Legend (cf.~figure~\ref{fig:KinMod}) } & $H_s$ & $H^*_d$\\
\hline
\includegraphics[width=0.2\textwidth]{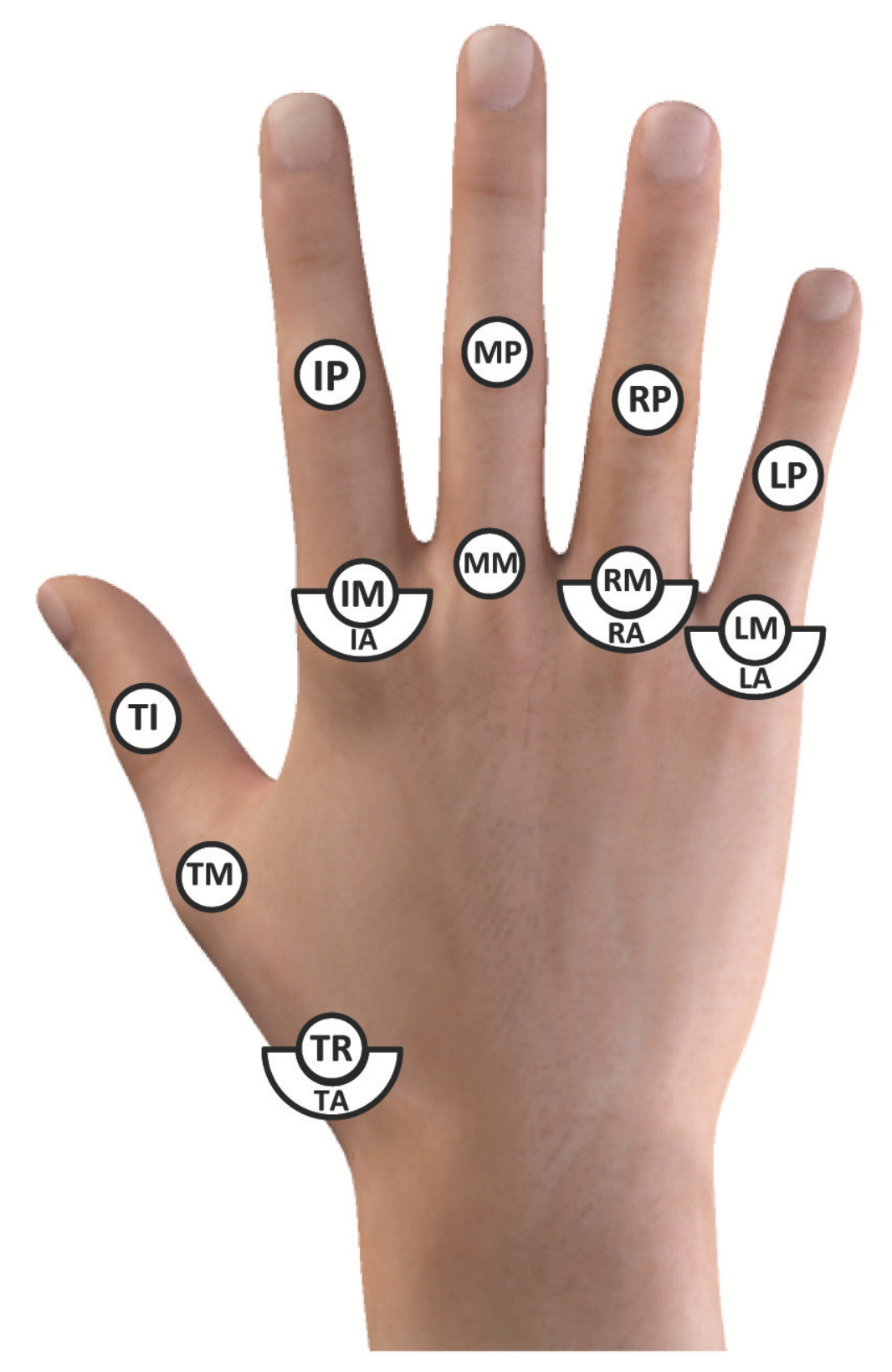} &
\includegraphics[width=0.2\textwidth]{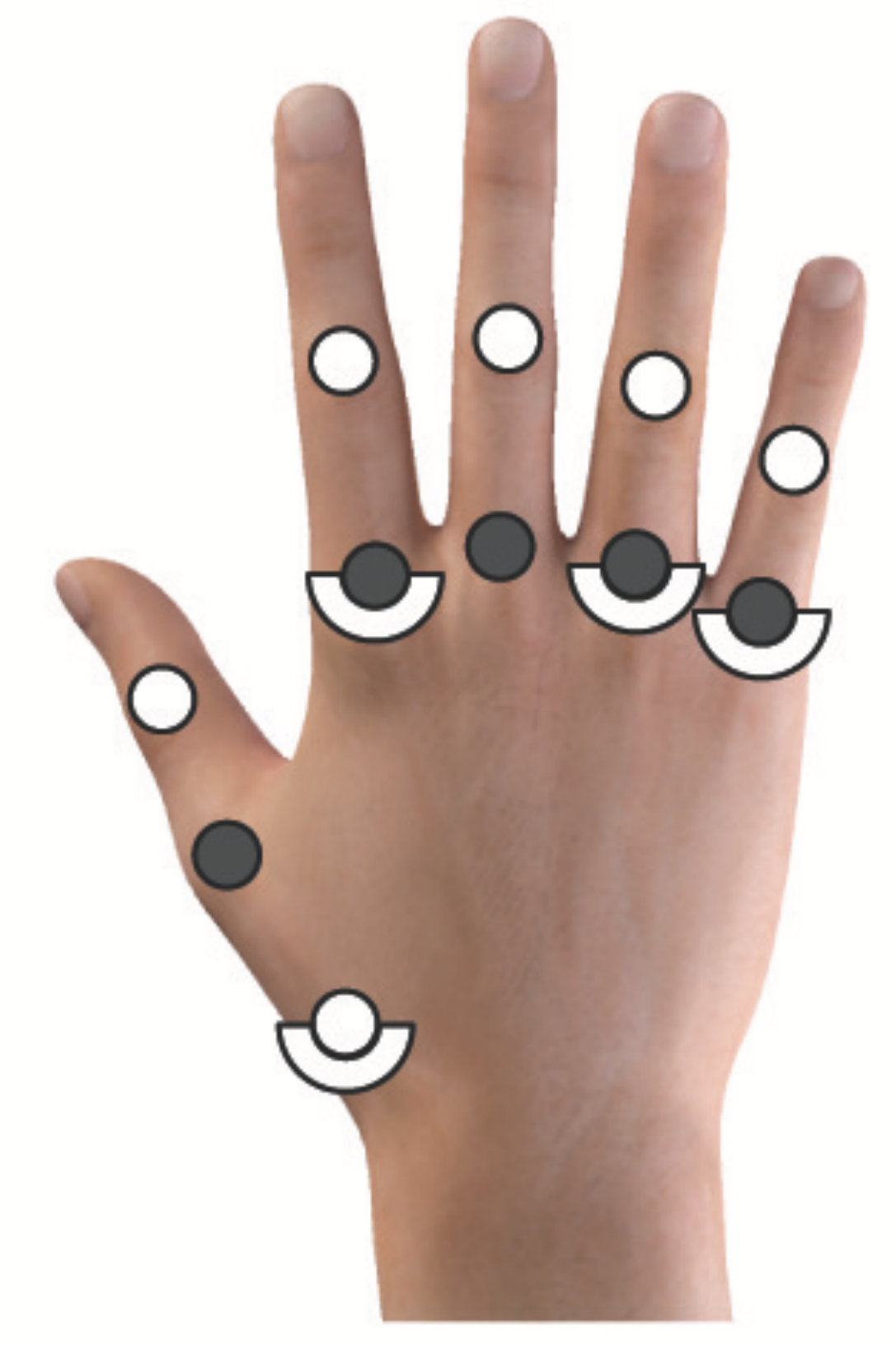} &
\includegraphics[width=0.2\textwidth]{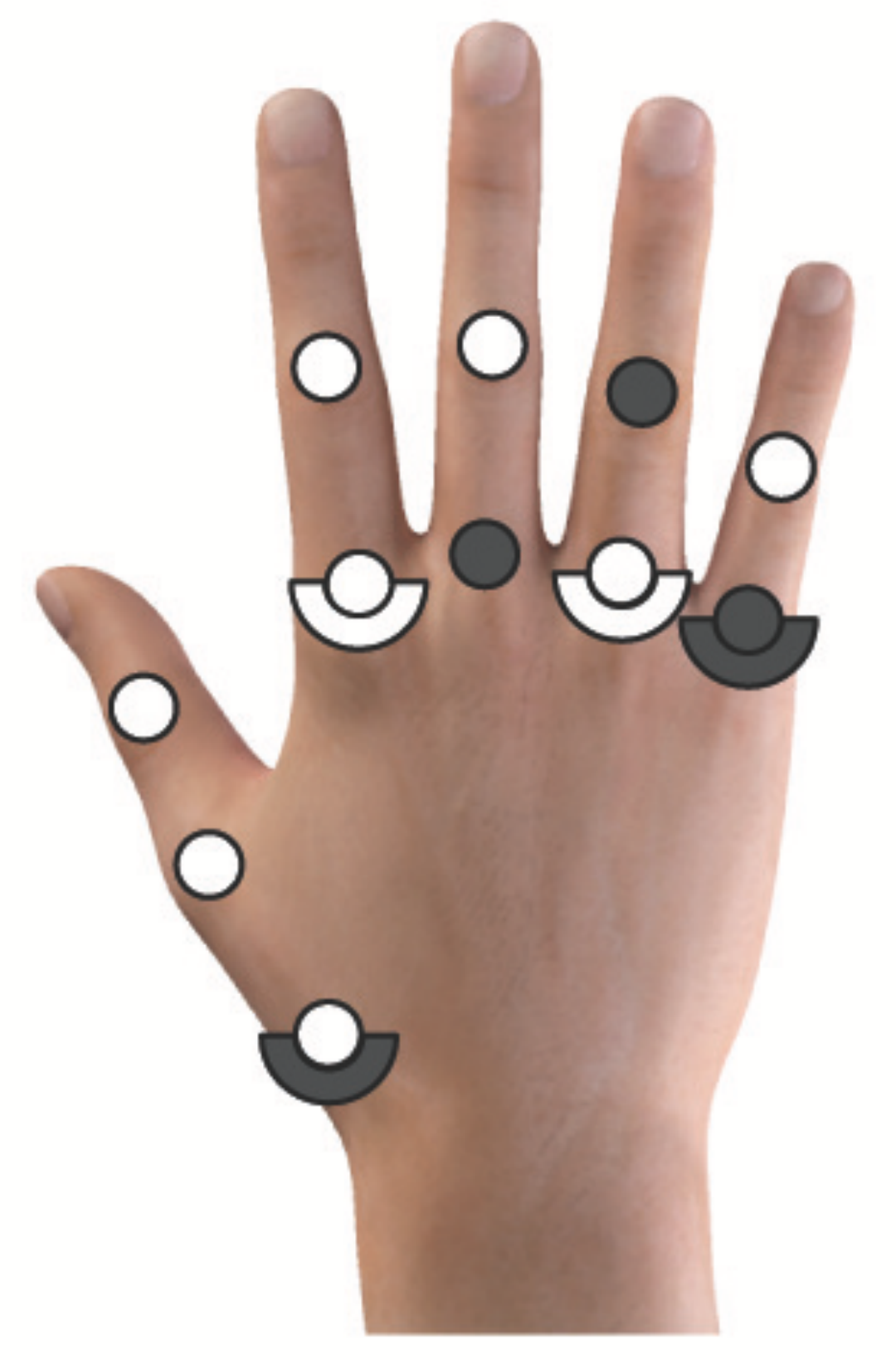}\\
\hline
\cline{2-3}
\multicolumn{1}{r|}{ } & \Tiny{Measured joints: $TM$, $IM$, $MM$, $RM$ and $LM$} & \Tiny{Measured joints: $TA$, $MM$, $RP$, $LA$ and $LM$}\\
\cline{2-3}
\end{tabular}
\end{center}
\caption{Discrete sensing distributions for matrix $H_s$, on the left, and $H^*_d$, on the right (cf.~figure~\ref{fig:KinMod}). The measured joints are highlighted in color.}
\label{fig:Hs-HOptSel}
\end{figure*}

\subsection{Estimation Results with Optimal Discrete Sensing Devices}

Measures are provided by grasp data acquired with the optical tracking system as in~\cite{Bianchi_etalI}, where degrees of freedom to be measured are chosen on the basis of optimization procedure outcomes, while the entire pose is recorded to produce accurate reference posture. In figure~\ref{fig:Hs-HOptSel} sensor locations related to matrix $H_s$ and  $H^*_d$ are represented. In order to compare reconstruction performance achieved with $H_s$ and $H^*_d$ we use as evaluation indices the average pose estimation error and average estimation error for each estimated DoF. Maximum errors are also reported. These errors as well as statistical tools are chosen according to the ones considered in~\cite{Bianchi_etalI}, where it is possible to find a complete description of the here adopted. Both noise-free and noisy measures are analyzed.

\setcounter{figure}{5}
\begin{figure*}[t!]
\begin{center}
	\renewcommand{\arraystretch}{1}
\begin{tabular}[c]{c}
Real Hand Postures\\
\end{tabular}
\begin{tabular}[c]{p{0.3cm}|p{2.4cm}|p{2.4cm}|p{2.4cm}|p{2.4cm}|}
\cline{2-5}
 &
\includegraphics[width=0.21\textwidth]{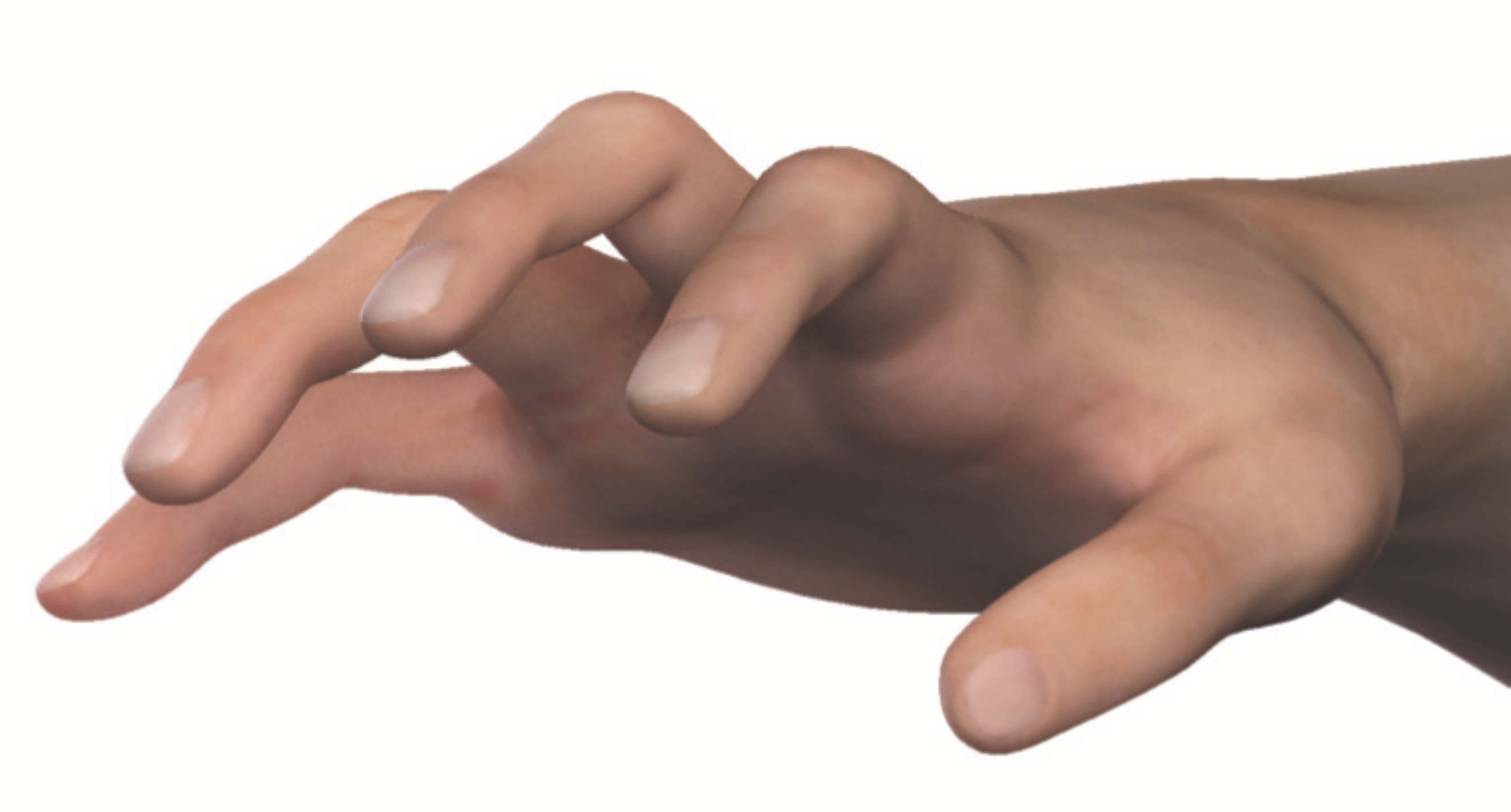} &
\includegraphics[width=0.185\textwidth]{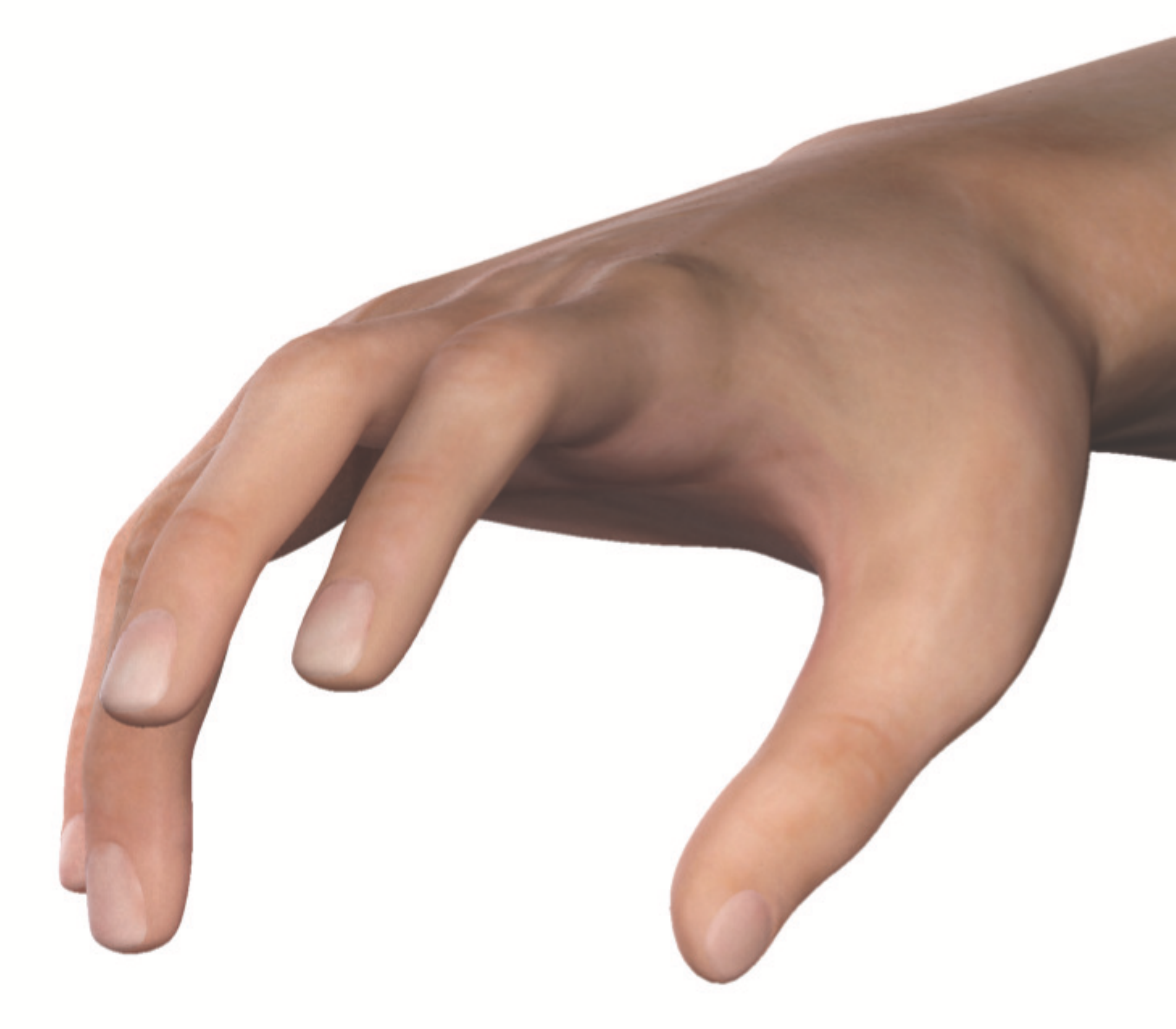} &
\includegraphics[width=0.165\textwidth]{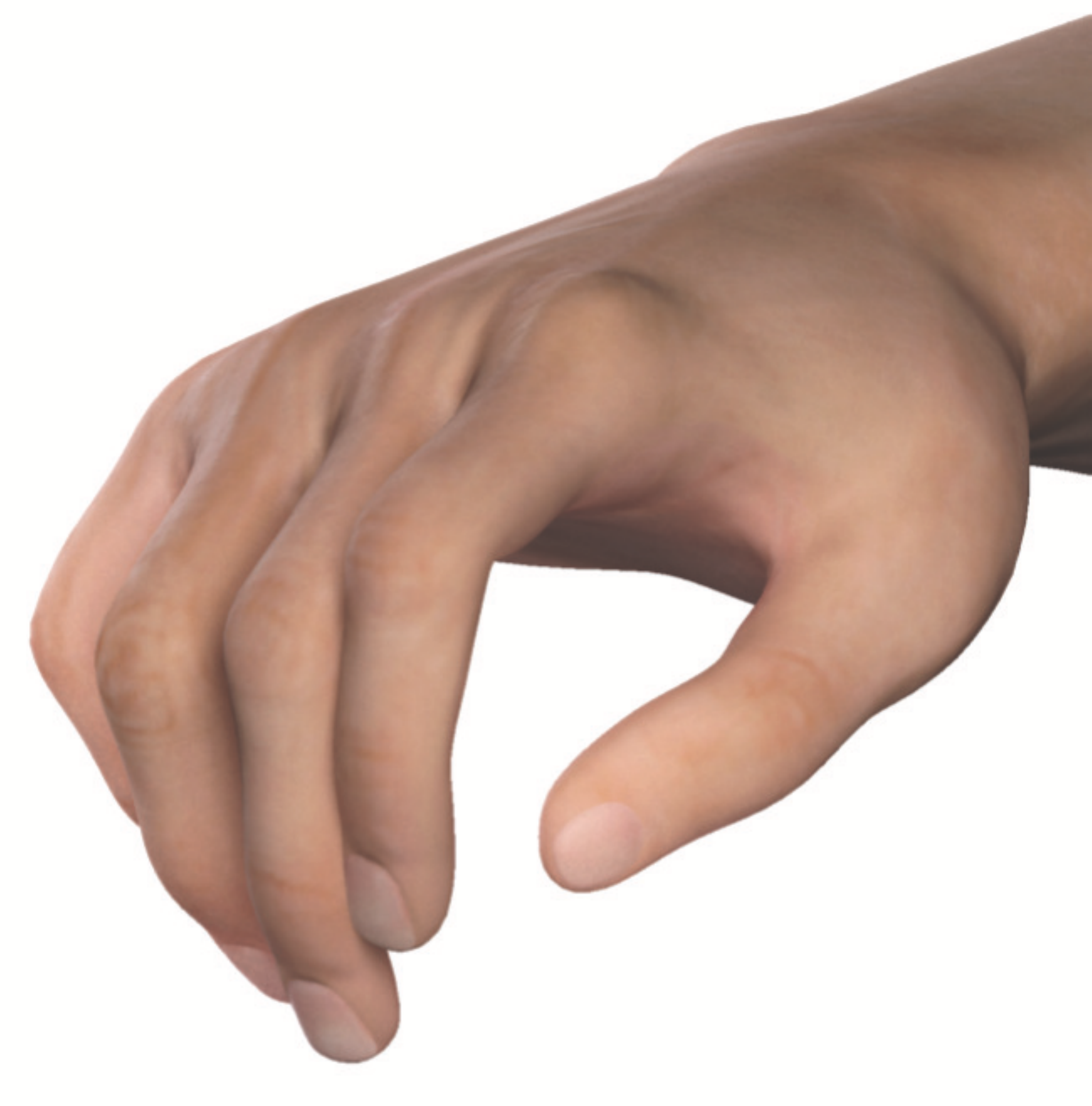} &
\includegraphics[width=0.2\textwidth]{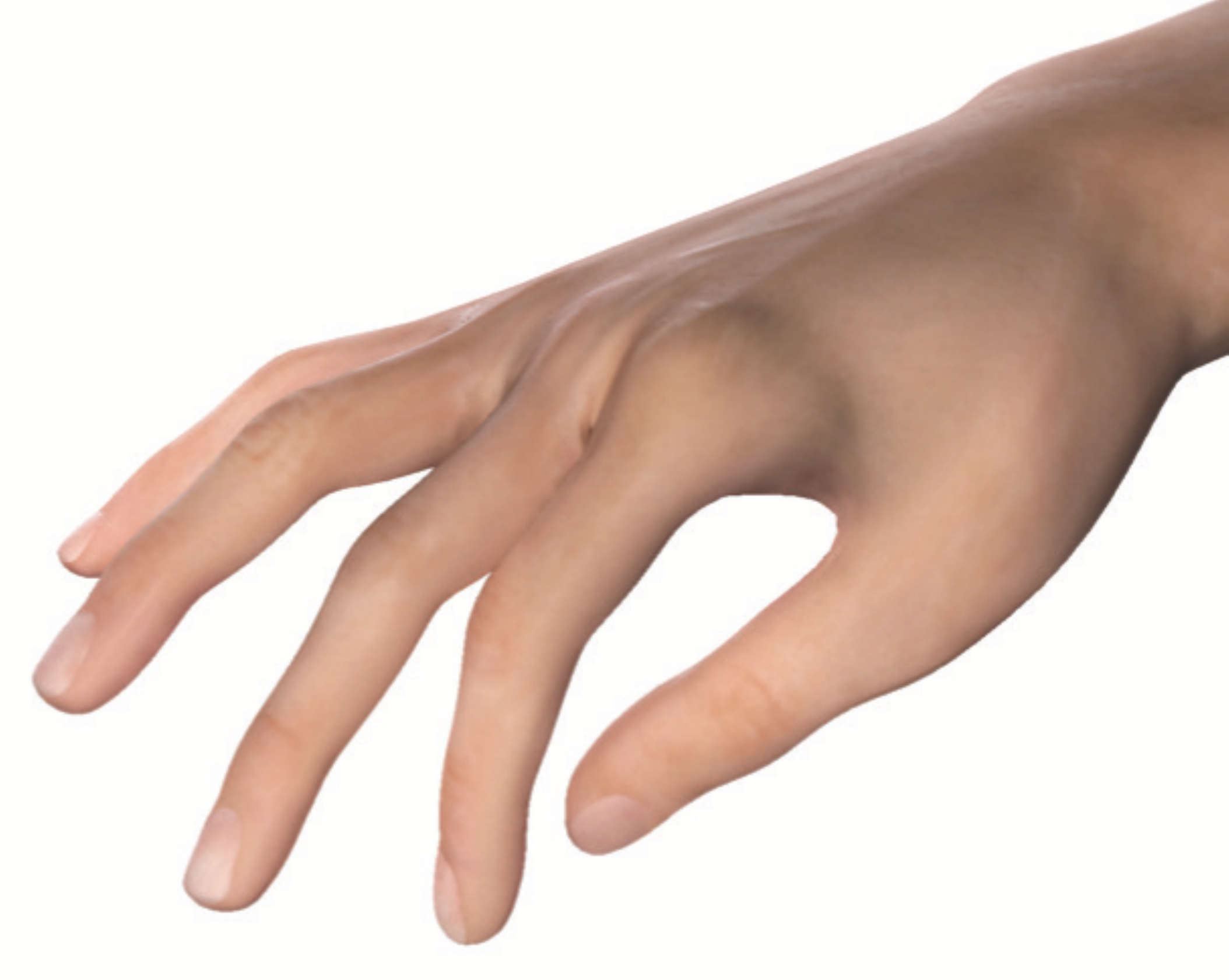}\\
\cline{2-5}
\cline{2-5}
\end{tabular}
\begin{tabular}[c]{c}
Posture estimation by using noise--free measures\\
\end{tabular}
\begin{tabular}[c]{|p{0.3cm}|p{2.4cm}|p{2.4cm}|p{2.4cm}|p{2.4cm}|}
\hline
\hline
{\rotatebox{90}{\mbox{MVE with $H_s$}}} &
\includegraphics[width=0.21\textwidth]{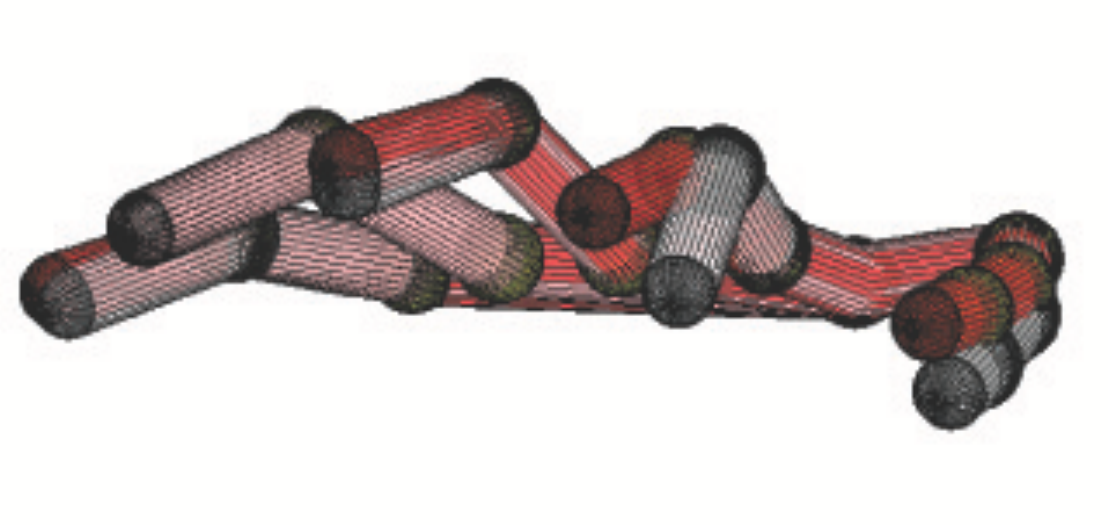} &
\includegraphics[width=0.2\textwidth]{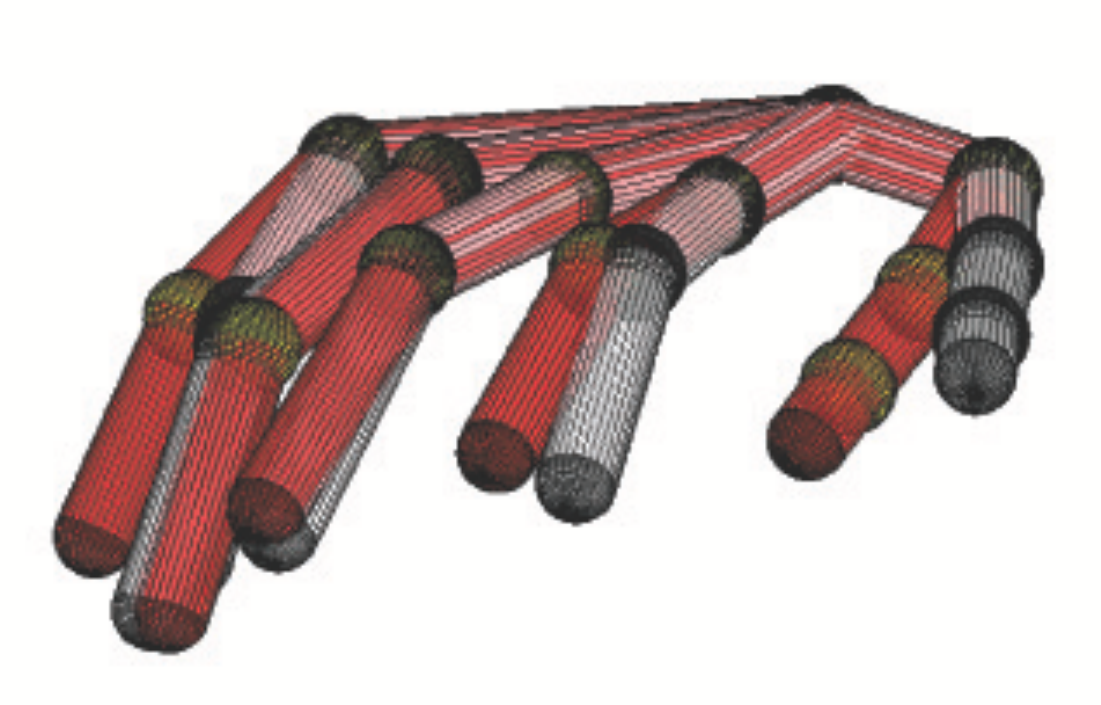} &
\includegraphics[width=0.175\textwidth]{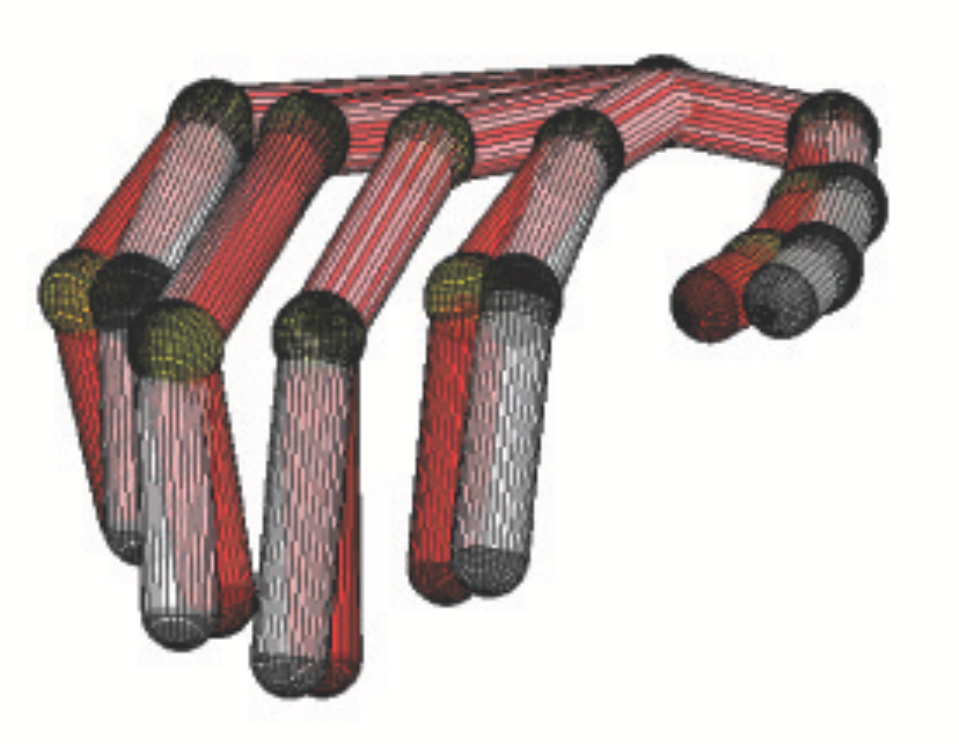} &
\includegraphics[width=0.19\textwidth]{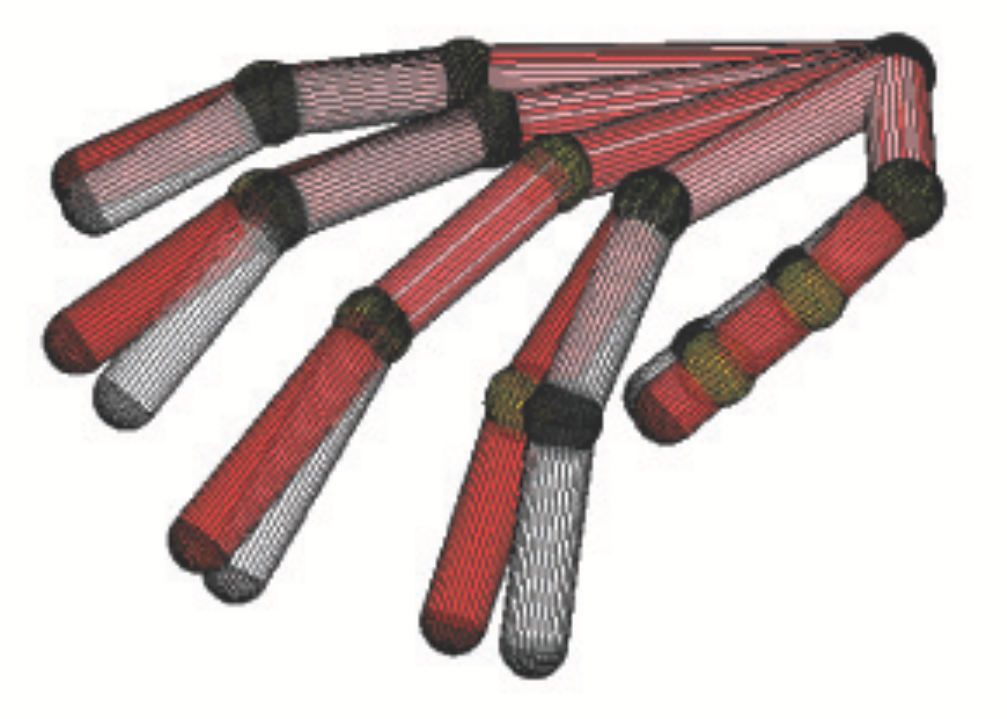}\\
\hline
{\rotatebox{90}{\mbox{MVE with $H^*_d$}}} &
\includegraphics[width=0.21\textwidth]{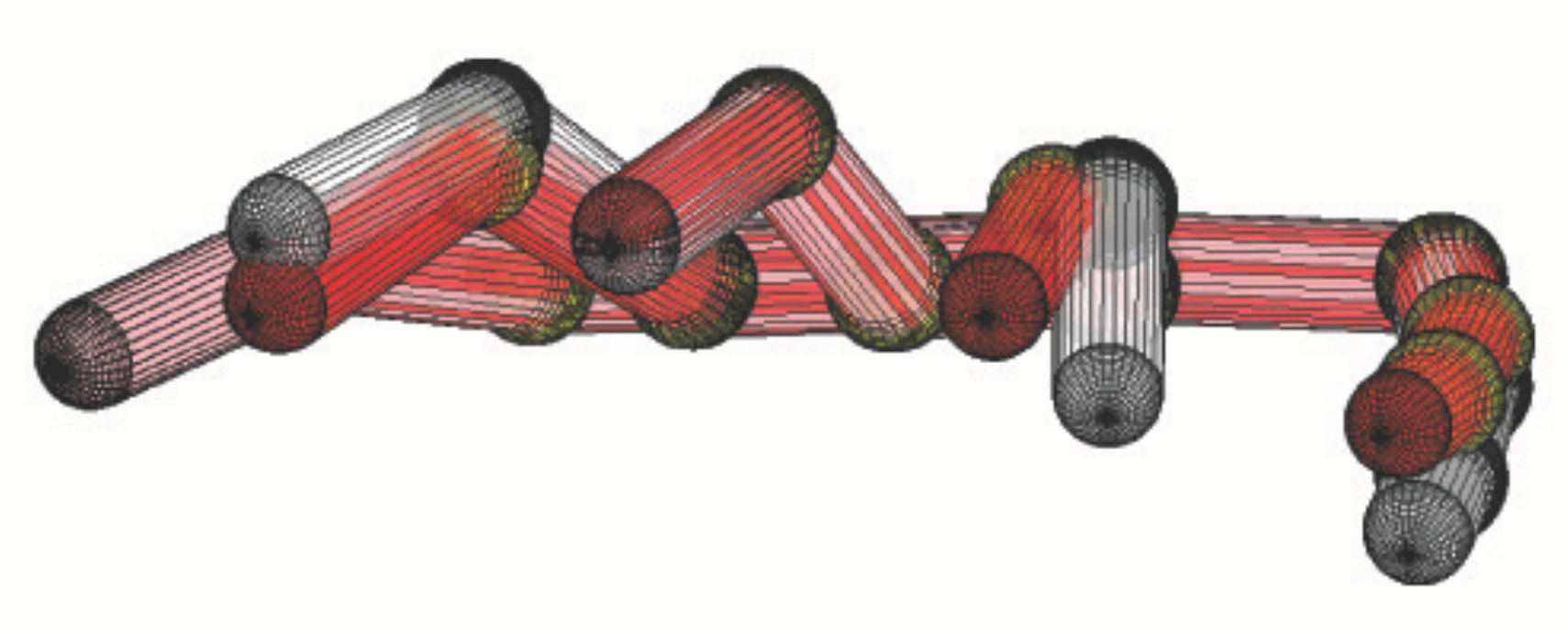} &
\includegraphics[width=0.2\textwidth]{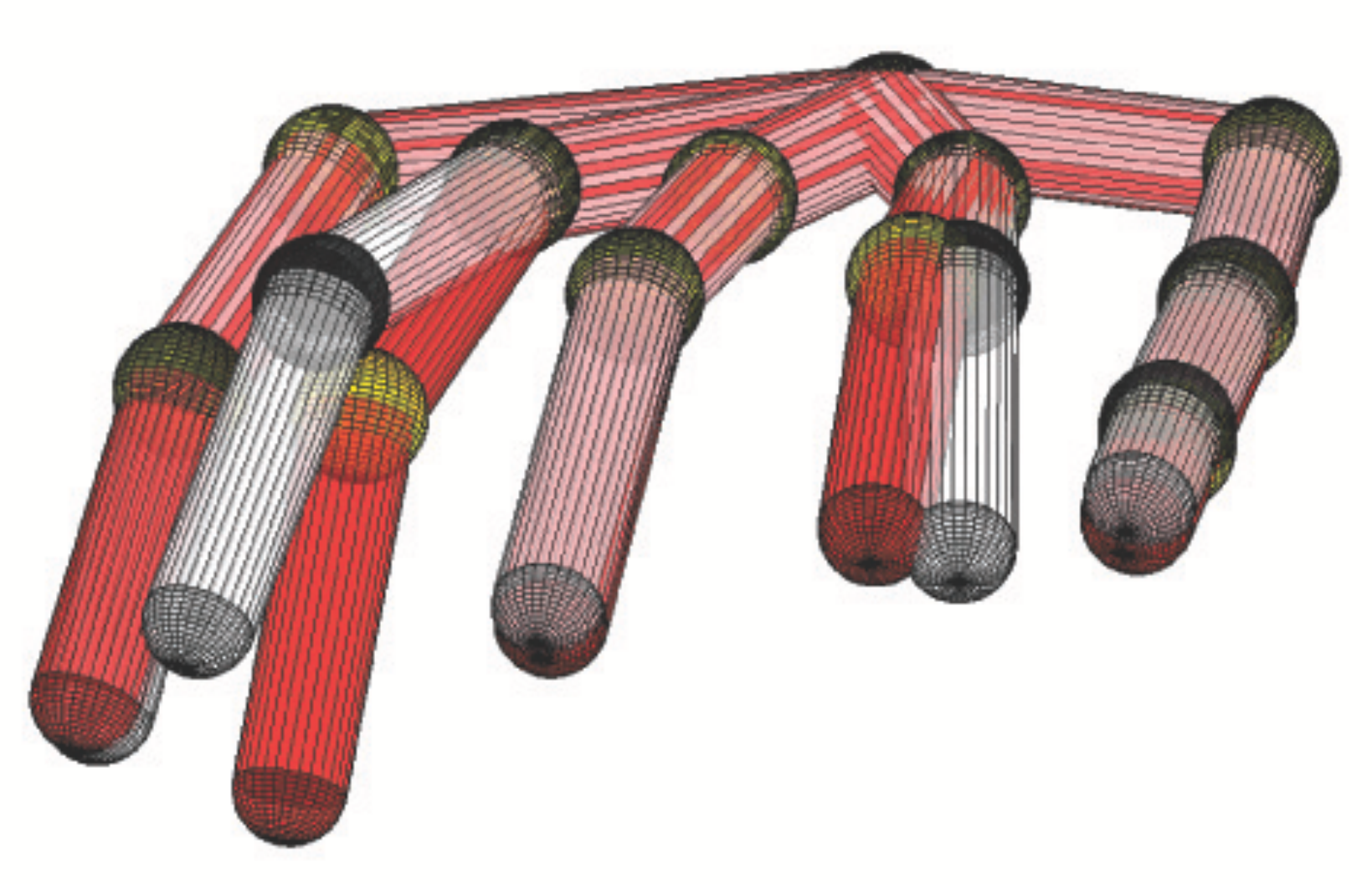} &
\includegraphics[width=0.175\textwidth]{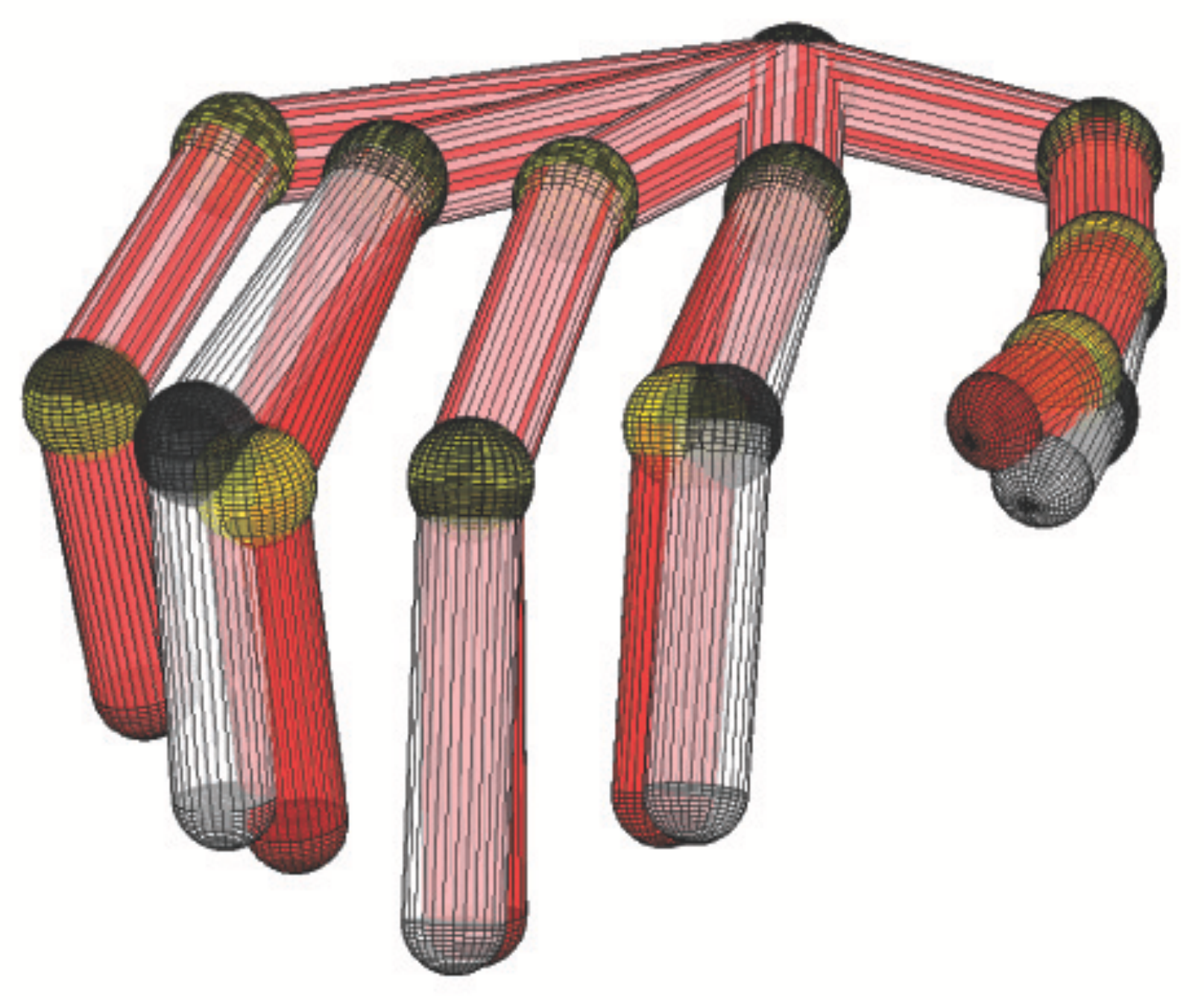} &
\includegraphics[width=0.21\textwidth]{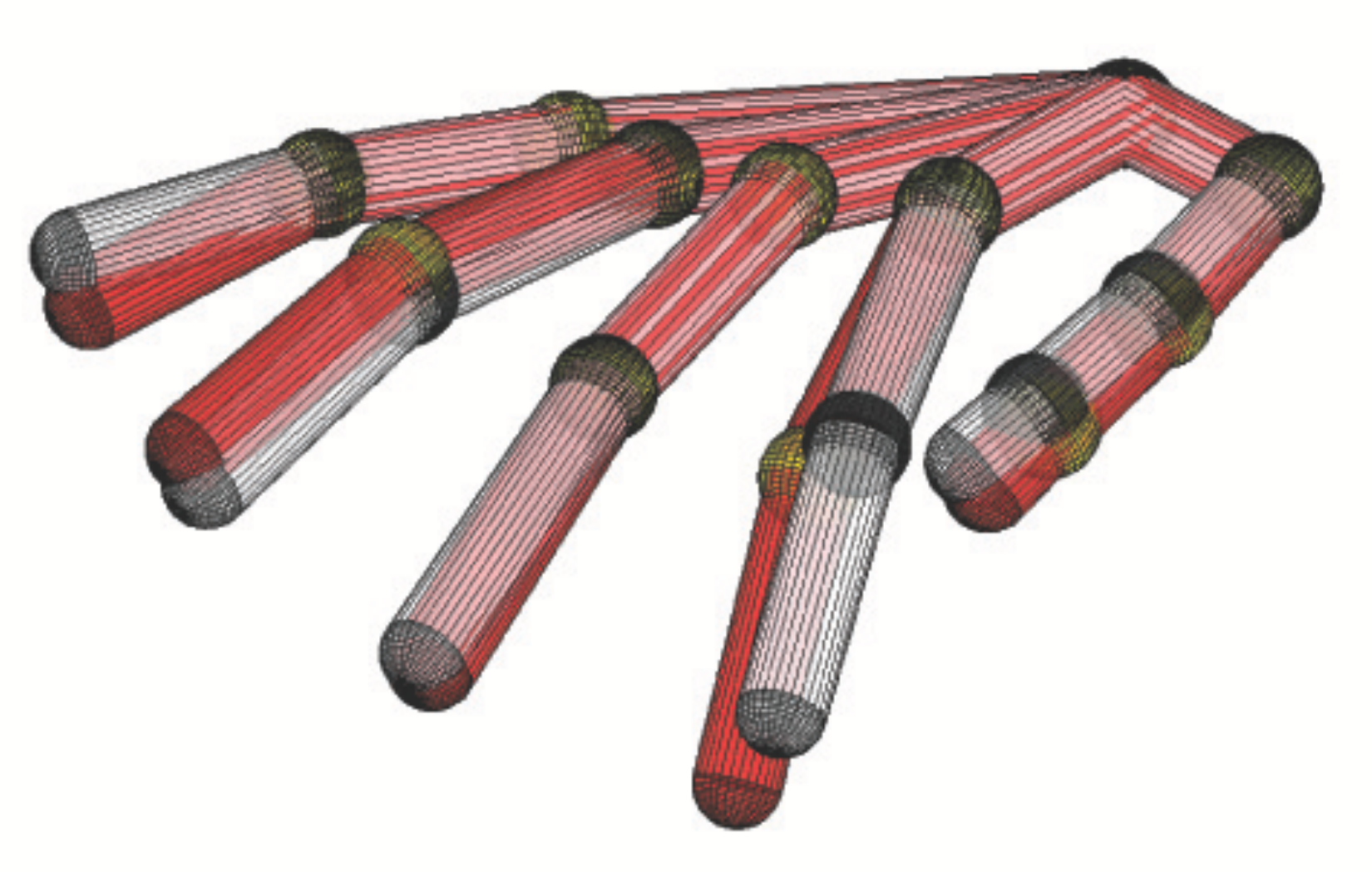}\\
\hline
\hline
\end{tabular}
\begin{tabular}[c]{c}
Posture estimation by using noisy measures\\
\end{tabular}
\begin{tabular}[c]{|p{0.3cm}|p{2.4cm}|p{2.4cm}|p{2.4cm}|p{2.4cm}|}
\hline
\hline
{\rotatebox{90}{\mbox{MVE with $H_s$}}} &
\includegraphics[width=0.22\textwidth]{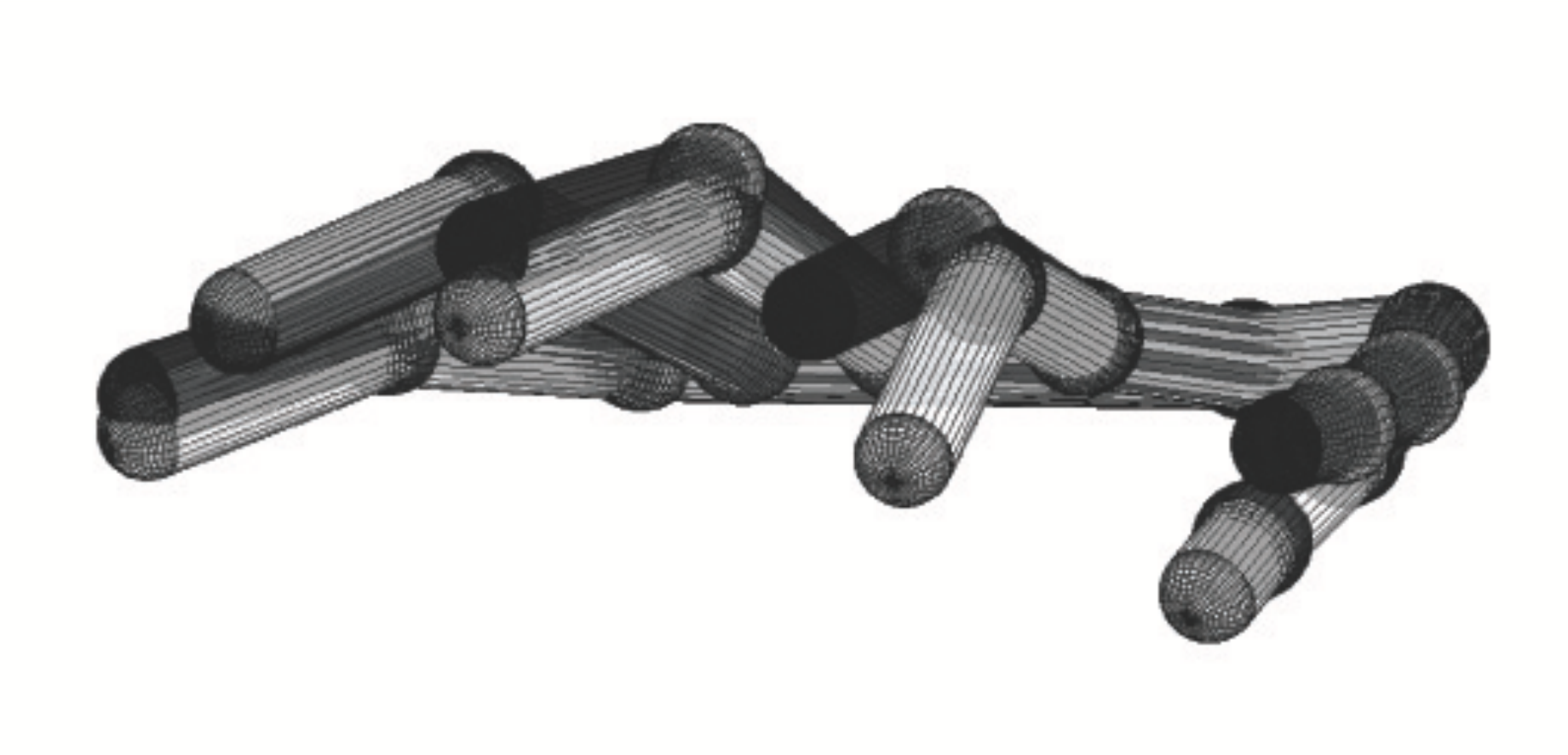} &
\includegraphics[width=0.21\textwidth]{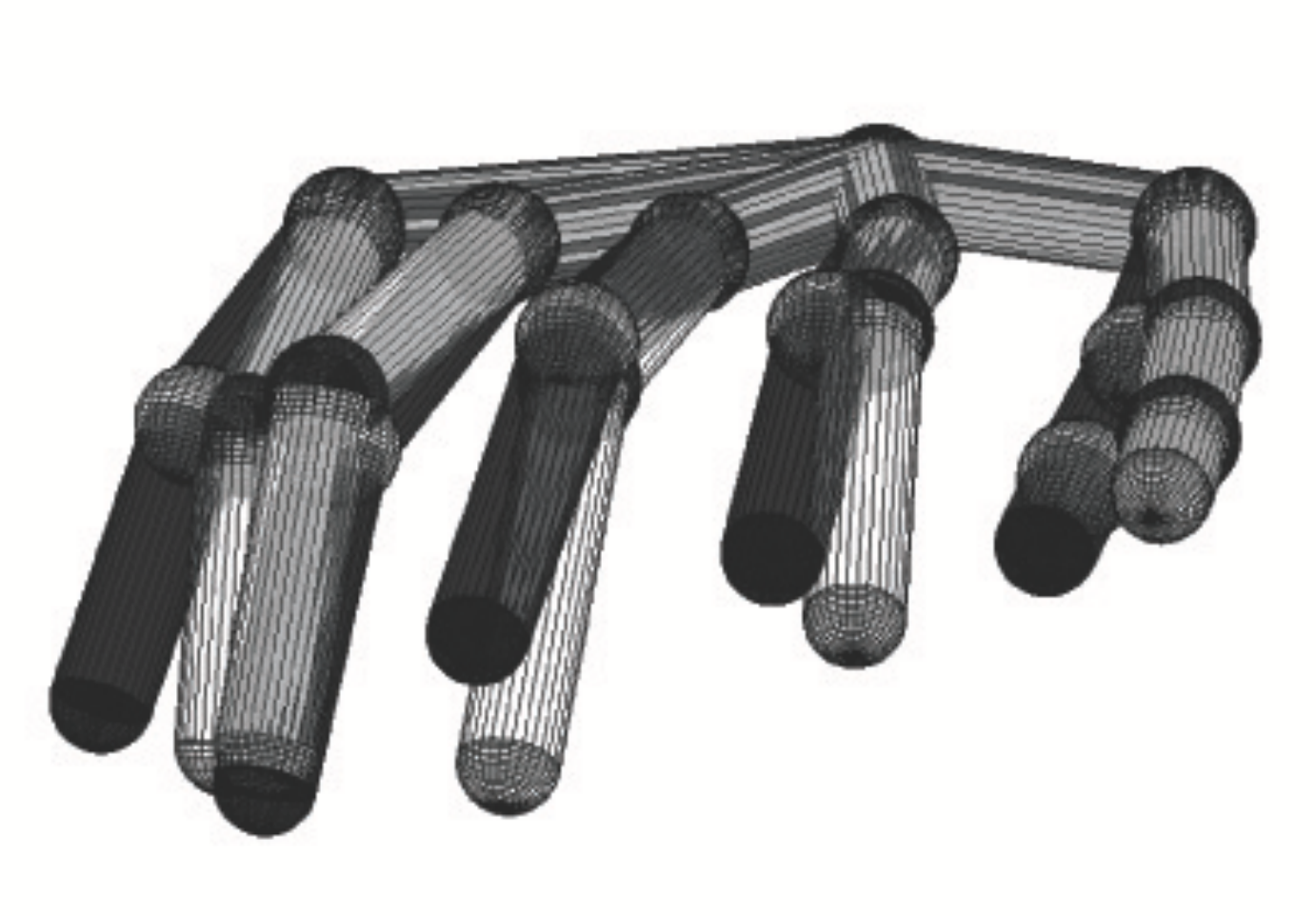} &
\includegraphics[width=0.18\textwidth]{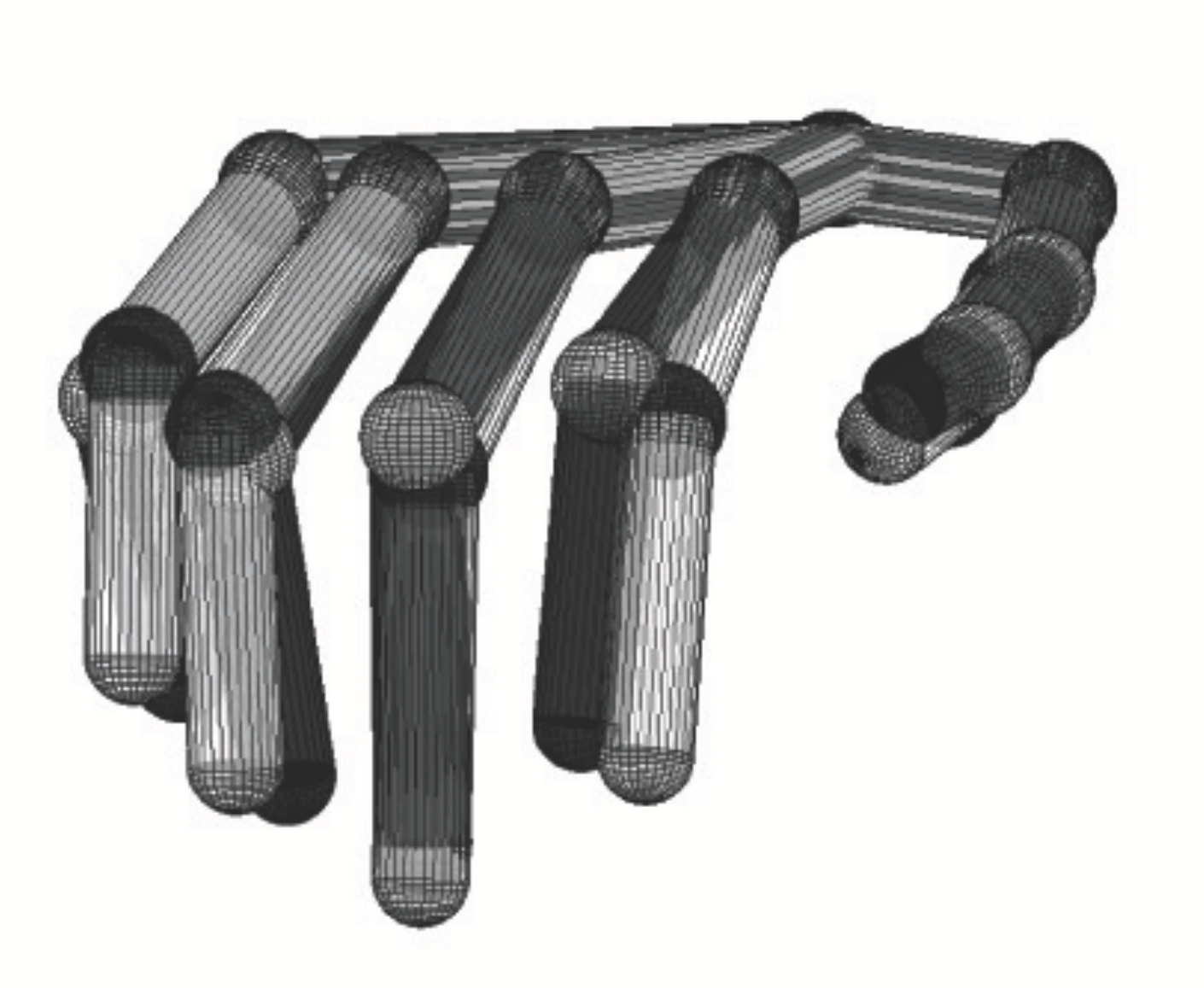} &
\includegraphics[width=0.2\textwidth]{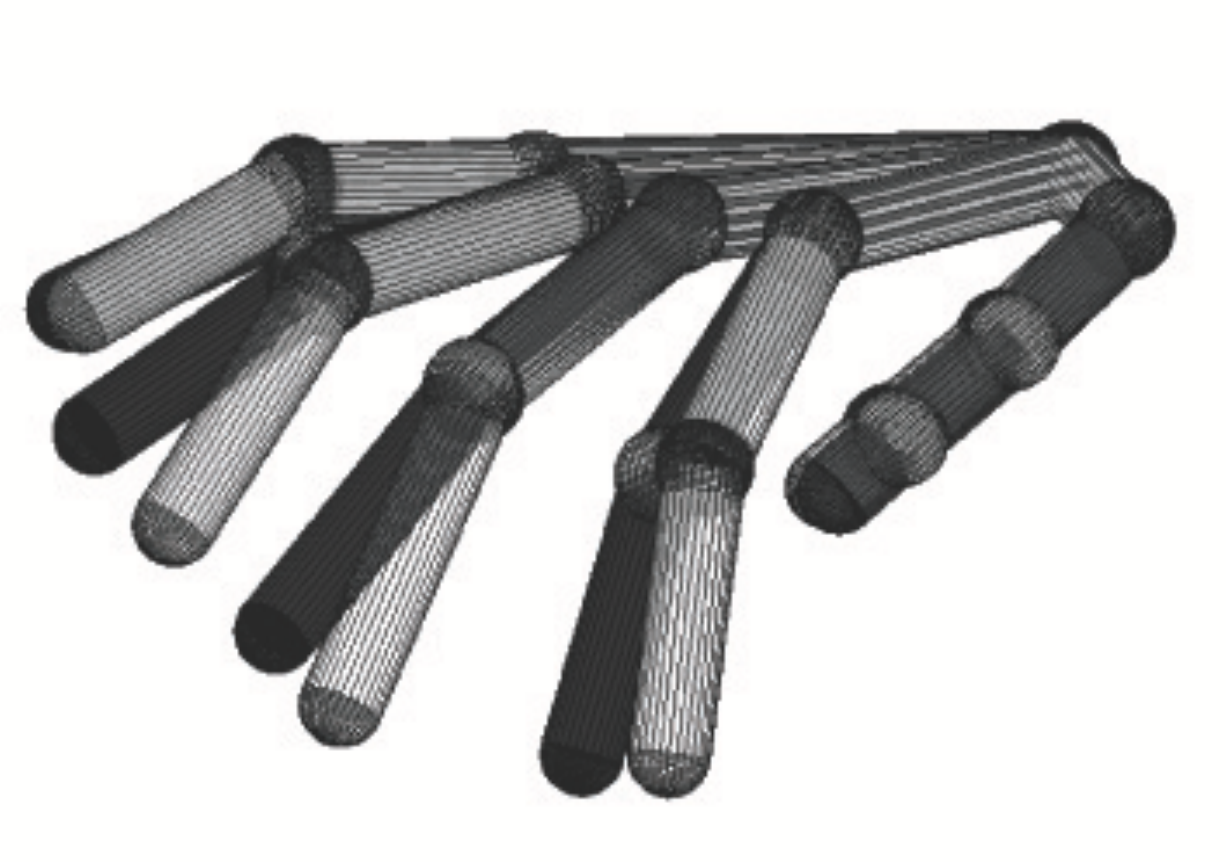}\\
\hline
{\rotatebox{90}{\mbox{MVE with $H^*_d$}}} &
\includegraphics[width=0.21\textwidth]{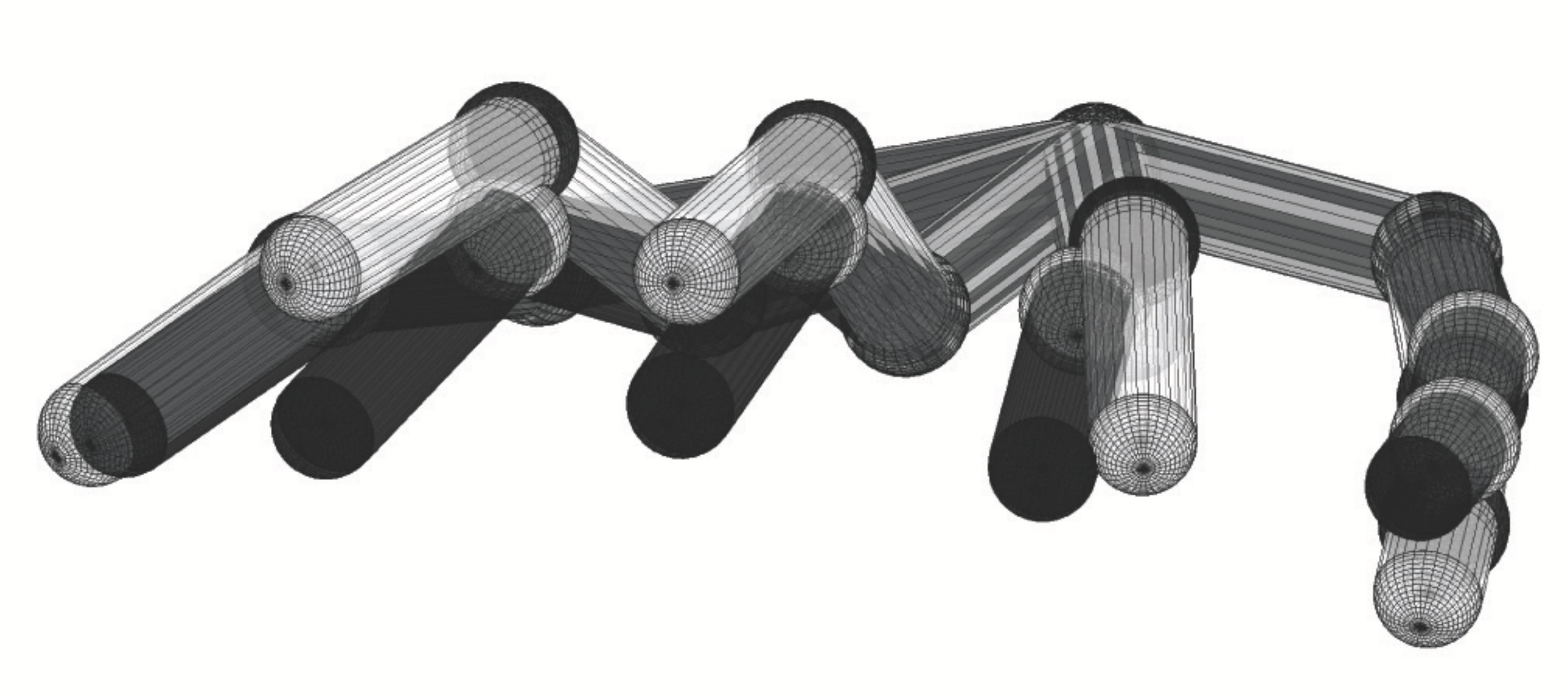} &
\includegraphics[width=0.21\textwidth]{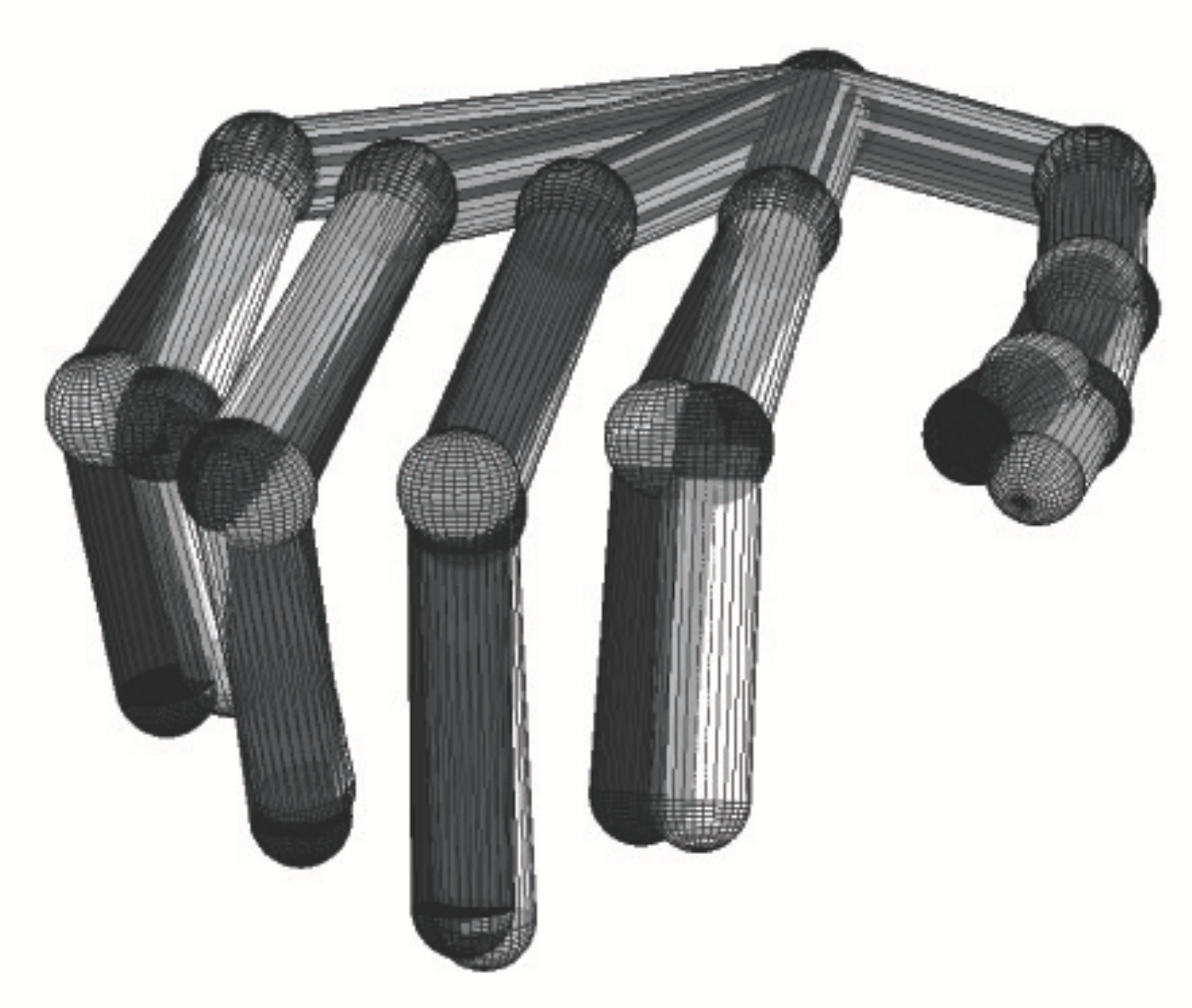} &
\includegraphics[width=0.18\textwidth]{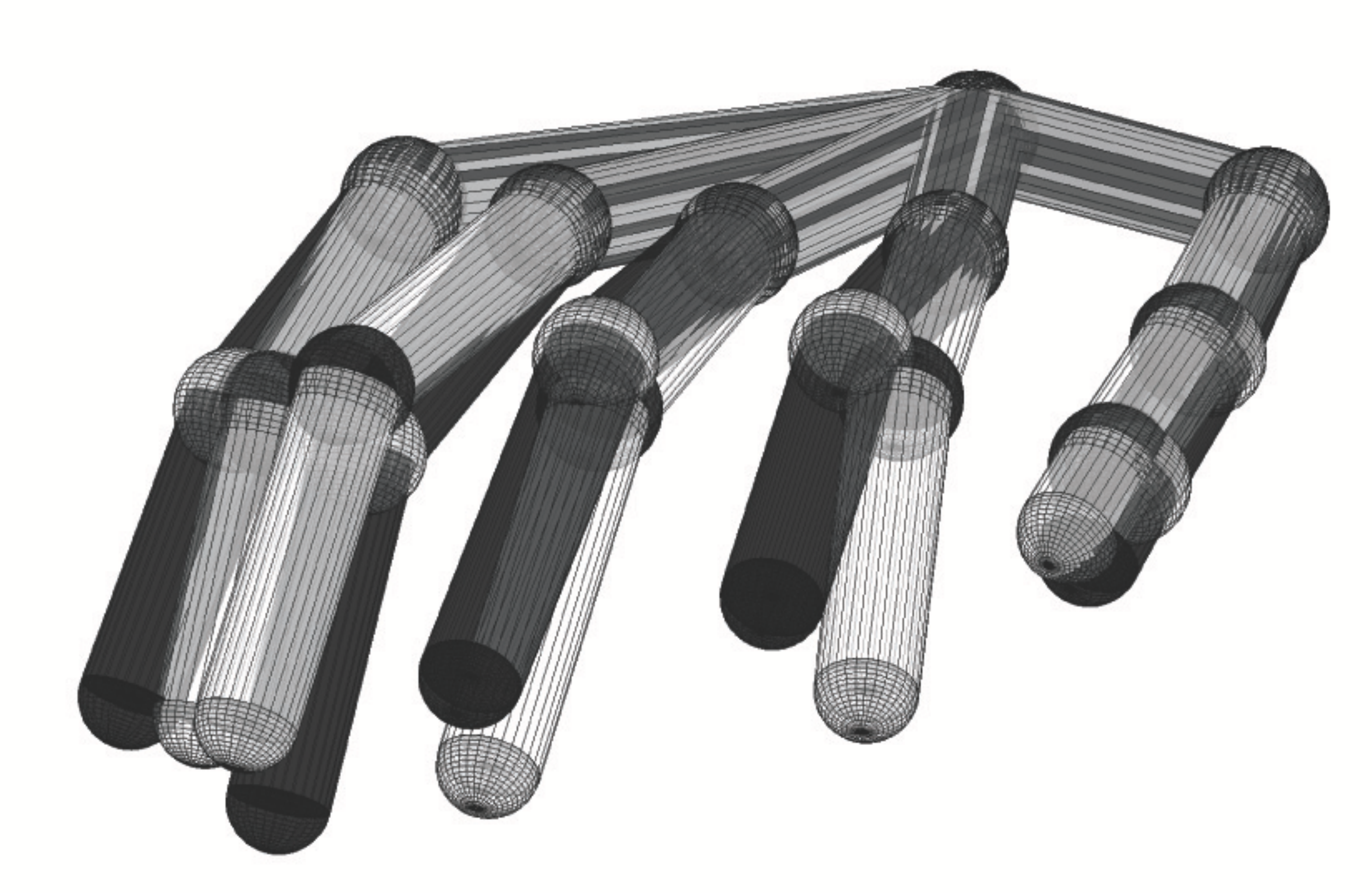} &
\includegraphics[width=0.2\textwidth]{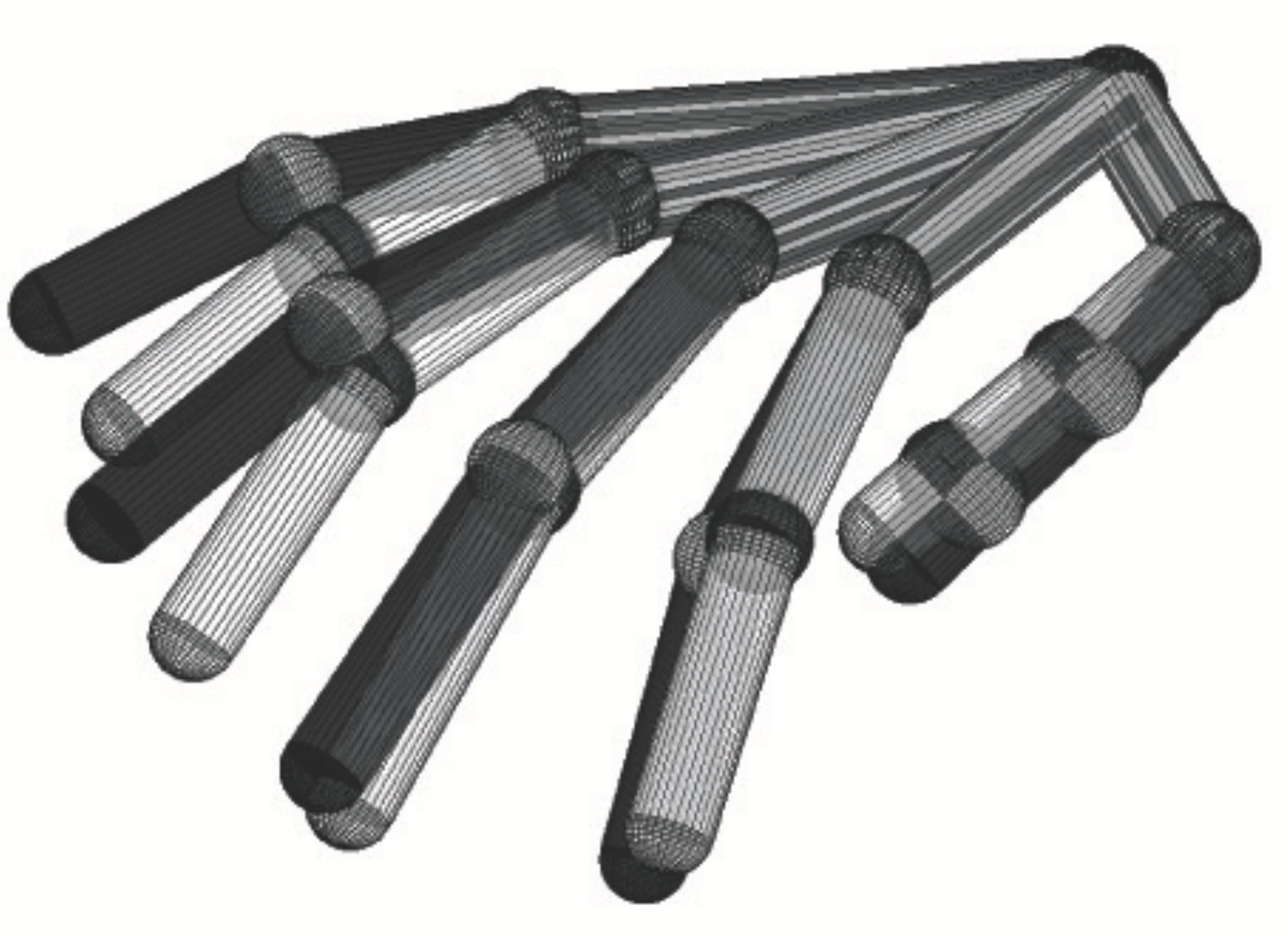}\\
\hline
\end{tabular}
\end{center}
\caption{Hand pose reconstructions MVE algorithm by using matrix $H_s$ which allows to measure $TM$, $IM$, $MM$, $RM$ and $LM$ and matrix $H^*_d$ which allows to measure $TA$, $MM$, $RP$, $LA$ and $LM$  (cf.~figure~\ref{fig:KinMod}). In color the real hand posture whereas in white the estimated one.}
\label{fig:PoseEst_WithAndWithoutNoise}
\end{figure*}

\subsubsection{Noise-Free Measures}

In terms of average absolute estimation pose errors ([$^{\circ}$]), performance obtained with $H^*_d$ is always better than the one exhibited by $H_s$ (3.67$\pm$0.93 vs.~6.69$\pm$2.38). Moreover, $H^*_d$ exhibits smaller maximum error than the one achieved with $H_s$ (i.e.~$8.25^{\circ}$ for $H^*_d$ vs.~$13.18^{\circ}$ for $H_s$). Statistical differences between results from $H_s$ and $H^*_d$ are found ($\text{p}~\simeq~0$, $T_{neq}$). In table~\ref{tab:totPVALUESFree35} average absolute estimation errors with their corresponding standard deviations for each DoF are reported. For the estimated DoFs, performance with $H^*_d$ is always better or not statistically different from the one referred to $H_s$. Maximum estimation errors underline cases where $H_s$ furnishes smaller values and vice versa, since they strictly depend on peculiar poses; however, results from the two matrices are globally comparable.
\begin{table}[t!]
\centering
\includegraphics[width=0.7\columnwidth]{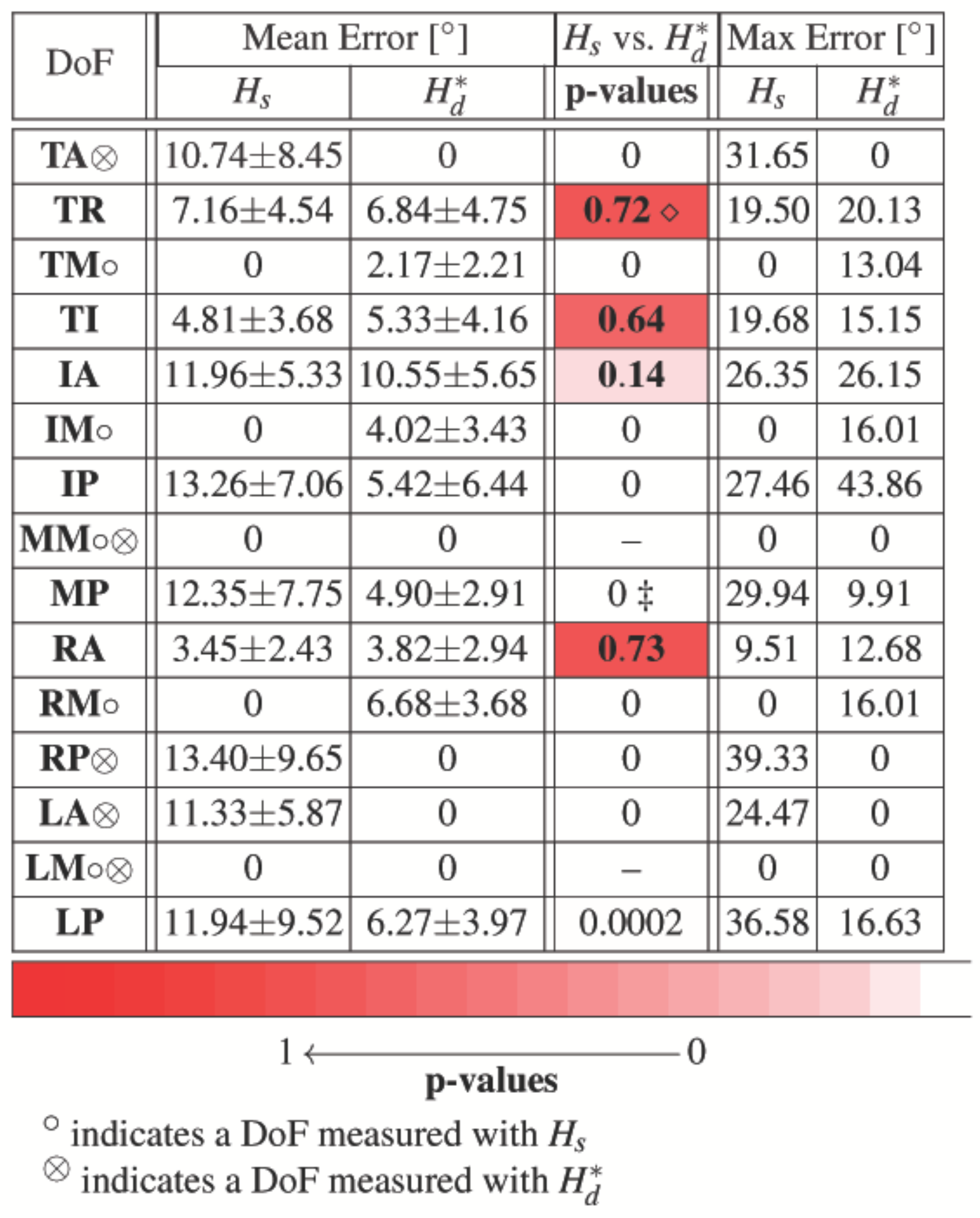}
\caption{Average estimation errors and standard deviation for each DoF $[\circ]$ for the simulated acquisition considering $H_s$ and $H^*_d$ both with five noise free measures. Maximum errors are also reported as well as p-values from the evaluation of DoF estimation errors between $H_s$ and $H^*_d$.
$\diamond$ indicates $T_{eq}$ test. $\ddagger$ indicates $T_{neq}$ test. When no symbol appears near the tabulated values, $U$ test is used.
$\bold{Bold}$ value indicates no statistical difference between the two methods under analysis at 5\% significance level.
When the difference is significative, values are reported with a $10^{-4}$ precision. p-values less than $10^{-4}$ are considered equal to zero. Symbol ``--'' is used for those DoFs which are measured by both $H_s$ and $H^*_d$.}
\label{tab:totPVALUESFree35}
\end{table}

\begin{figure}[t!]
\centering
\includegraphics[width=0.95\columnwidth]{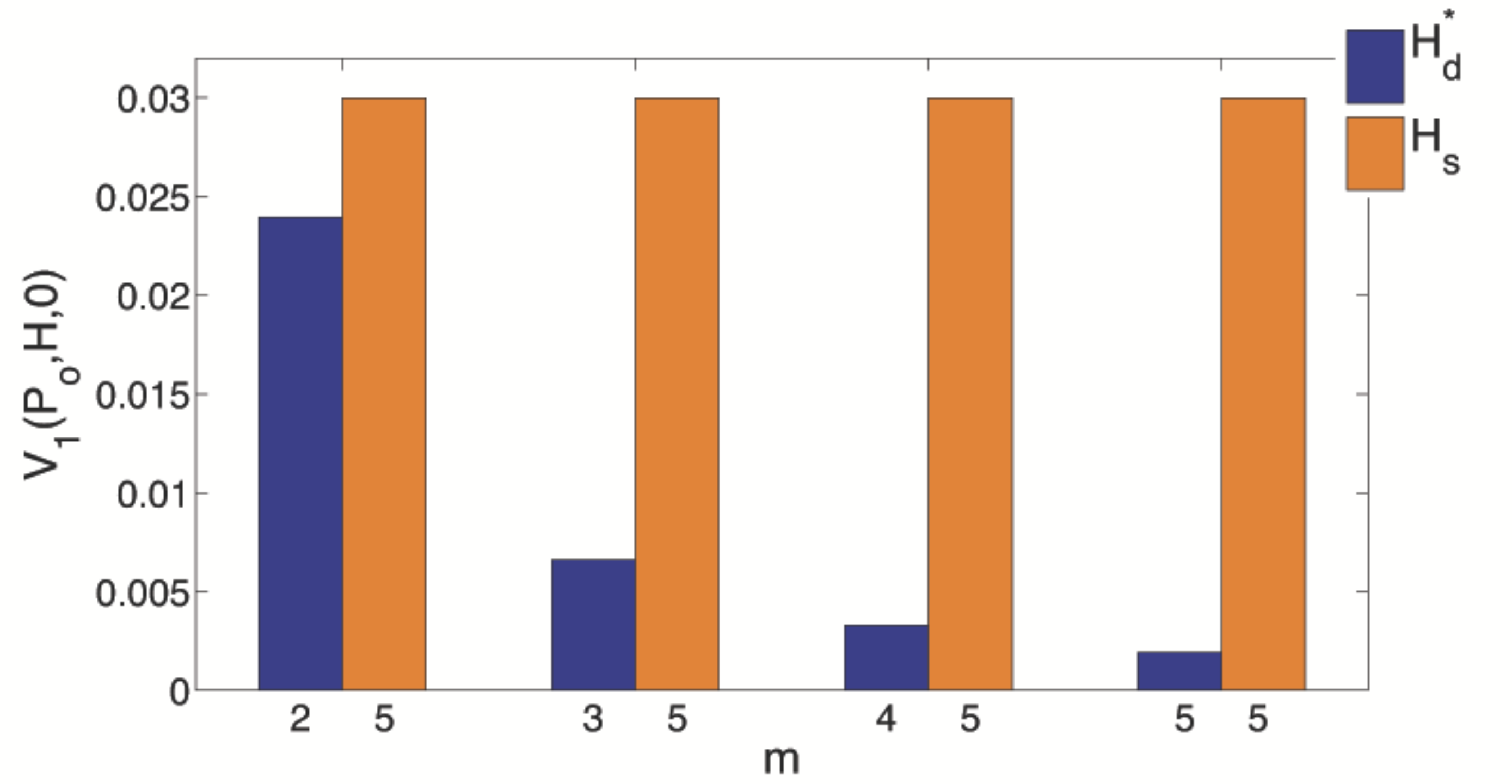}
\caption{Squared Frobenius norm for the {\em a posteriori} covariance matrix of $H_s$ with $m=5$ measures, and $H^*_d$ with $m=2,\,3,\,4,\,5$ measures, in case of noise--free measures.}
\label{fig:parOpt}
\end{figure}

\begin{figure}[t!]
\centering
\includegraphics[width=0.95\columnwidth]{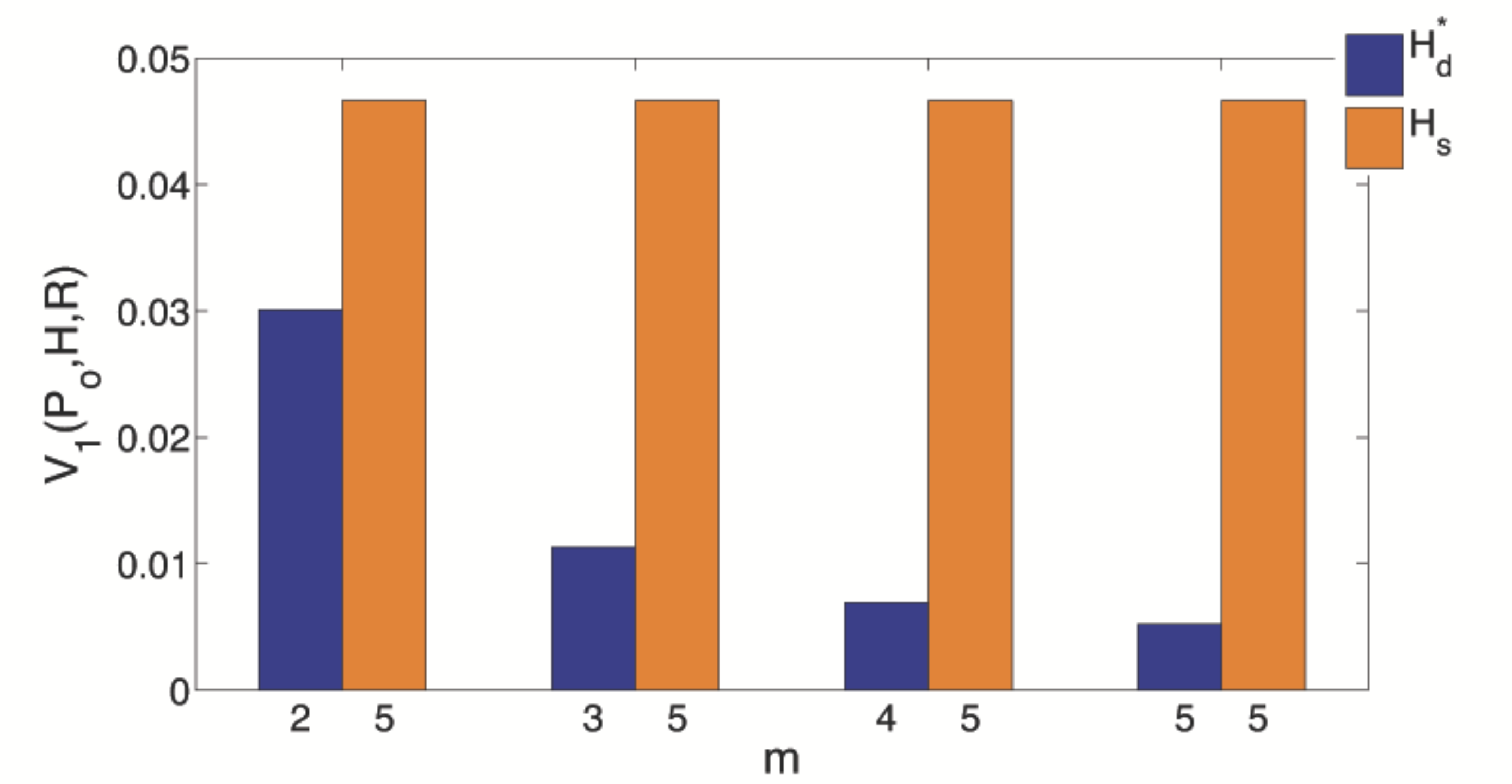}
\caption{Squared Frobenius norm for the {\em a posteriori} covariance matrix of $H_s$ with $m=5$ measures, and $H^*_d$ with $m=2,\,3,\,4,\,5$ measures, in case of noisy measures.}
\label{fig:parOptR}
\end{figure}

In figure~\ref{fig:parOpt}, squared Frobenius norm for the {\em a posteriori} covariance matrix of $H_s$ with $m=5$ measures, and $H^*_d$ with $m=2,3,4,5$ measures, in case of noise-free measures is reported. Notice that squared Frobenius norm is significantly smaller in the optimal case, even when a reduced number of measures is considered.

\subsubsection{Noisy Measures}

In case of noise, performance in terms of average absolute estimation pose errors ([$^{\circ}$]) obtained with $H^*_d$ is better than the one exhibited by $H_s$ (5.96$\pm$1.42  vs.~8.18$\pm$2.70). Moreover, maximum pose error with $H^*_d$ is the smallest ($9.30^{\circ}$  vs.~$15.35^{\circ}$ observed with $H_s$). Statistical difference between results from $H_s$ and $H^*_d$ are found ($\text{p}$=0.001, $T_{neq}$).

In table~\ref{tab:totPVALUESNoise35} average absolute estimation error with standard deviations are reported for each DoF. For the estimated DoFs, performance with $H^*_d$ is always better or not statistically different from the one referred to $H_s$. Maximum estimation errors with $H^*_d$ are usually inferior to the ones obtained with $H_s$.

\begin{table}[t!]
\centering
\includegraphics[width=0.7\columnwidth]{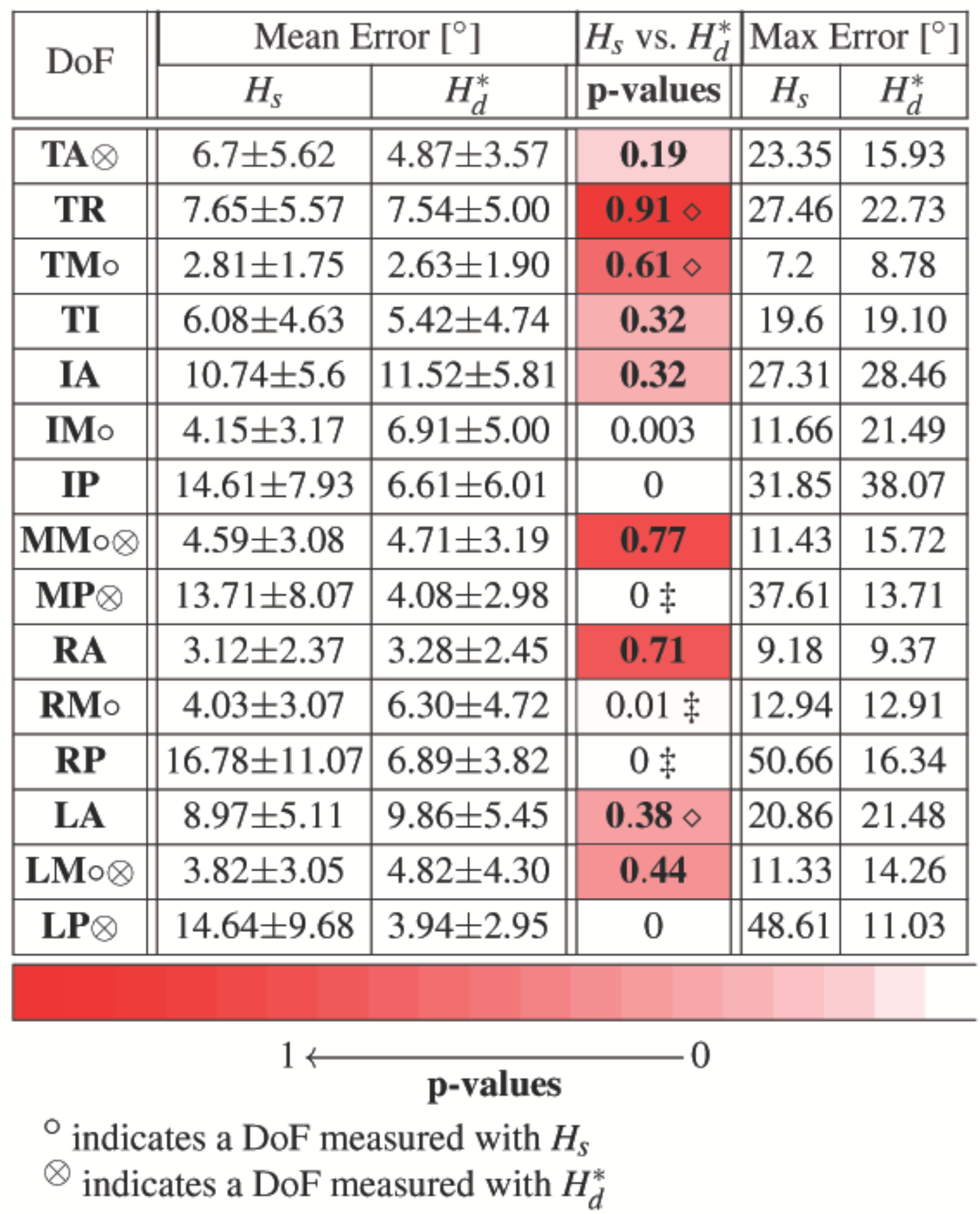}
\caption{Average estimation errors and standard deviation for each DoF $[\circ]$ for the simulated acquisition considering $H_s$ and $H^*_d$ both with five noisy measures. Maximum errors are also reported as well as p-values from the evaluation of DoF estimation errors between $H_s$ and $H^*_d$.
$\diamond$ indicates $T_{eq}$ test. $\ddagger$ indicates $T_{neq}$ test. When no symbol appears near the tabulated values, $U$ test is used.
$\bold{Bold}$ value indicates no statistical difference between the two methods under analysis at 5\% significance level.
When the difference is significative, values are reported with a $10^{-4}$ precision. p-values less than $10^{-4}$ are considered equal to zero. Symbol ``--'' is used for those DoFs which are measured by both $H_s$ and $H^*_d$.}
\label{tab:totPVALUESNoise35}
\end{table}
Figure~\ref{fig:parOptR} shows the squared Frobenius norm for the {\em a posteriori} covariance matrix of $H_s$ with $m=5$ measures, and $H^*_d$ with $m=2,\,3,\,4,\,5$ measures, in case of noise. Also in this situation, squared Frobenius norm is significantly smaller in the optimal case, even if a reduced number of measures is considered, thus suggesting that an optimal design leading to error statistics minimization can be achieved using optimal matrix with an inferior number of measured DoFs w.r.t.~$H_s$. Notice that in this case, squared Frobenius norm values are larger than the corresponding ones obtained in absence of noise, as expected.

Finally, in figure~\ref{fig:PoseEst_WithAndWithoutNoise} some reconstructed poses with MVE algorithm are reported by using both $H_s$ and $H^*_d$ measurement matrix, with and without additional noise. Under a qualitative point of view, what is noticeable is that reconstructed poses are not far from the real ones for
both measurement matrices. Moreover, it is not surprising that some poses seem to be estimated in a better manner using $H_s$ and vice versa, even if from the previously described statistical results $H^*_d$ provides best average performance. Indeed, MVE methods are thought to minimize error statistics rather than worst-case sensing errors related to peculiar poses~\cite{Bicchican}.

\section{Conclusions}

In this paper, optimal design of sensing glove has been proposed on the basis of the minimization of the \emph{a posteriori} covariance matrix as it results from the estimation procedure described in~\cite{Bianchi_etalI}. Optimal solution are described for the continuous, discrete and hybrid case.

In the continuous sensing case, optimal measures are individuated by principal components of the \emph{a priori} covariance matrix, thus suggesting the importance of postural synergies not only for hand control.

The reconstruction performance obtained by combining the estimation technique proposed in~\cite{Bianchi_etalI} and the optimal design proposed in this paper is significantly improved if compared with non-optimal measure case. Therefore,~\cite{Bianchi_etalI} and~\cite{Bianchi_etalII} provide a complete procedure to enhance the performance and for a more effective development of both sensorization systems for robotic hands and active touch sensing systems, which can be used in a
wide range of applications, ranging from virtual reality to
tele-robotics and rehabilitation. Moreover, by optimizing the number and location of sensors the production costs can be further reduced without loss of performance, thus increasing device diffusion.

\section*{Acknowledgment}
Authors gratefully acknowledge Marco Santello and Lucia Pallottino for the inspiring discussion and useful suggestions.

\bibliographystyle{apalike}

\appendix
\section{Appendix}
This appendix is devoted to the derivation of the gradient equation given in proposition~\ref{thm:FinalGradientFlow}.

\paragraph*{\textbf{Proof of Proposition 1}}
The Frobenius norm of a matrix $A\in\real^{n\times n}$ is given as
\[
\|A\|_F = \sqrt{\tr(A^TA)}= \sqrt{\sum_{i=1}^n\sigma_i^2}\,,
\]
and hence,
\begin{equation}
	\label{eq:Problem}
\|P_o-P_oH^T(HP_oH^T+R)^{-1}HP_o\|^2_F=\tr(P_p^TP_p)
\end{equation}
where $P_p = P_o-P_oH^T(HP_oH^T+R)^{-1}HP_o$. To find the gradient flow, we need to compute
\begin{align}
\frac{\partial \tr(P_p^TP_p)}{\partial H} &= \tr\left(\frac{\partial (P_p^TP_p)}{\partial H}\right) = \tr\left(\frac{\partial P_p^T}{\partial H}\,P_p+P_p^T\,\frac{\partial P_p}{\partial H}\right) =\nonumber\\
&= \tr\left(\frac{\partial P_p^T}{\partial H}\,P_p\right)+\tr\left(P_p^T\,\frac{\partial P_p}{\partial H}\right) = 2\tr\left(P_p^T\,\frac{\partial P_p}{\partial H}\right)\,,
\label{eq:DerivativeStep1}
\end{align}
as $\partial (\mathbf{X}\mathbf{Y})=(\partial\mathbf{X})\mathbf{Y}+\mathbf{X}(\partial\mathbf{Y})$ and $\tr(A^T)=\tr(A)$. Moreover, from differentiation rules of expressions w.r.t.~a matrix $\mathbf{X}$, we have
$\partial \mathbf{X}^{-1} = -\mathbf{X}^{-1}(\partial\mathbf{X}) \mathbf{X}^{-1}$
and hence, assuming $\Sigma(H) = (HP_oH^T+R)^{-1}$, we obtain
\begin{align}
	\frac{\partial P_p}{\partial\mathbf{H}} &= -P_o\left[(\partial H)^T\Sigma(H)H+H^T\left(\frac{\partial\Sigma(H)}{\partial H}H+\Sigma(H)\,\partial\, H\right)\right]P_o =\nonumber\\
	&= -P_o\left[(\partial H)^T\Sigma(H)H-H^T\left(\Sigma(H)\left(\partial HP_oH^T+\right.\right.\right.\nonumber\\
	&\left.\left.\left.+HP_o(\partial H)^T\right)\Sigma(H)H+\Sigma(H)\,\partial H\right)\right]P_o\,.
	\label{eq:DerivativeStep2}
\end{align}
Substituting \eqref{eq:DerivativeStep2} in \eqref{eq:DerivativeStep1} and by using a well note trace property ($\tr(A+B)=\tr(A)+\tr(B)$) we obtain
\begin{align}
	\frac{\partial \tr(P_p^TP_p)}{\partial H} &= 2\left[-\tr(P_p^TP_o(\partial H)^T\Sigma(H)HP_o)+\tr(P_p^TP_oH^T\Sigma(H)\partial HP_oH^T\Sigma(H)HP_o) +\right.\nonumber\\
	&\left.+\tr(P_p^TP_oH^T\Sigma(H)HP_o(\partial H)^T\Sigma(H)HP_o)-\tr(P_p^TP_oH^T\Sigma(H)\partial HP_o)\right]\,.
\end{align}
As $\tr(AB)=\tr(BA)$, we obtain
\begin{align}
	\frac{\partial \tr(P_p^TP_p)}{\partial H} &= 2\left[-\tr((\partial H)^T\Sigma(H)HP_oP_p^TP_o)+\tr(P_oH^T\Sigma(H)HP_oP_p^TP_oH^T\Sigma(H)\partial H)+\right.\nonumber\\
	&\left.+\tr((\partial H)^T\Sigma(H)HP_oP_p^TP_oH^T\Sigma(H)HP_o)-\tr(P_oP_p^TP_oH^T\Sigma(H)\partial H)\right]
\end{align}
and as $\tr(A^T)=\tr(A)$ we have
\begin{align}
	\frac{\partial \tr(P_p^TP_p)}{\partial H} &= 2\left[-\tr(P_o^TP_pP_o^TH^T\Sigma(H)^T\partial H)+\tr(P_oH^T\Sigma(H)HP_oP_p^TP_oH^T\Sigma(H)\partial H)+\right.\nonumber\\
	&\left.+\tr(P_o^TH^T\Sigma(H)^THP_o^TP_pP_o^TH^T\Sigma(H)^T\partial H)-\tr(P_oP_p^TP_oH^T\Sigma(H)\partial H)\right]\,,
\end{align}
whence,
\begin{align}
	\frac{\partial \tr(P_p^TP_p)}{\partial H} &= 2\left[-P_o^TP_pP_o^TH^T\Sigma(H)^T+P_oH^T\Sigma(H)HP_oP_p^TP_oH^T\Sigma(H)+\right.\nonumber\\
	&\left.+P_o^TH^T\Sigma(H)^THP_o^TP_pP_o^TH^T\Sigma(H)^T-P_oP_p^TP_oH^T\Sigma(H)\right] =\nonumber\\
	&= 2\left[(P_oH^T\Sigma(H)H-I)P_oP_p^TP_oH^T\Sigma(H)+(P_o^TH^T\Sigma(H)^TH-I)P_o^TP_pP_o^TH^T\Sigma(H)^T\right]\,.
\end{align}
Matrices $P_p$, $P_o$ and $\Sigma(H)$ are symmetric, and hence, for this particular case we obtain
\begin{equation}
	\frac{\partial \tr(P_p^TP_p)}{\partial H} = -4\left[P_p^2P_oH^T\Sigma(H)\right]^T\,,
	\label{eq:GradMA}
\end{equation}
with $\Sigma(H) = (HP_oH^T+R)^{-1}$.
\end{document}